\newtheorem{theorem}{Theorem}[section]
\newtheorem{lemma}[theorem]{Lemma}
\newtheorem{definition}[theorem]{Definition}
\newtheorem{assumption}[theorem]{Assumption}
\newcommand\hz{\widehat{z}}
\newcommand\bz{\mathbf{z}}
\newcommand\hbz{\widehat{\bz}}
\newcommand\col{\operatorname{col}}
\newcommand\bE{\mathbb{E}}
\newcommand\bx{\mathbf{x}}
\newcommand\ox{x}
\newcommand\teta{\tilde{\eta}}
\newcommand\onbf{\overline{\grad \bf f}}
\newcommand\nbf{\grad \mathbf{f}}
\newcommand\md{\mathcal{D}}
\newcommand\mB{\mathcal{B}}
\newcommand\Mcal{\mathcal{M}}
\newcommand{\Pcal}{\mathcal{P}}
\newcommand{\R}{\mathbb{R}}
\newcommand{\be}{\begin{equation}}
\newcommand{\ee}{\end{equation}}
\newcommand{\iprod}[2]{\left \langle #1, #2 \right \rangle }
\newcommand{\grad}{\mathrm{grad}}
\definecolor{purple}{rgb}{0.6, 0.2, 0.8}
\newcommand{\hj}[1]{{\color{magenta}#1}}
\definecolor{UniBlue}{RGB}{83,121,170}
\definecolor{DarkGray}{RGB}{90,90,90}
\definecolor{LightGray}{RGB}{150,150,150}
\definecolor{oldTextGreen}{RGB}{115,155,15}
\definecolor{TextGreen}{RGB}{112,183,100}
\definecolor{oldOcean}{RGB}{23,142,189}
\definecolor{Ocean}{RGB}{30,106,181}
\definecolor{BG}{RGB}{215,215,215}
\definecolor{darkred}{RGB}{204,41,0}
\newcommand{\zjj}[1]{{\color{darkred}#1}}
\title{Nonconvex Federated Learning on Compact Smooth Submanifolds With Heterogeneous Data}
\author{%
Jiaojiao Zhang \\
 KTH Royal Institute of Technology\\
 %SE-100 44 Stockholm, Sweden \\
\texttt{jiaoz@kth.se} \\
\And
Jiang Hu\thanks{Corresponding author} \\
UC Berkeley \\
%Berkeley, CA 94720\\
\texttt{hujiangopt@gmail.com} \\
\AND
Anthony Man-Cho So \\
The Chinese University of Hong Kong \\
\texttt{manchoso@se.cuhk.edu.hk} \\
\And
Mikael Johansson\\
 KTH Royal Institute of Technology\\
% SE-100 44 Stockholm, Sweden \\
\texttt{mikaelj@kth.se} \\
}
\begin{document}

\maketitle

\iffalse
\begin{abstract}
Optimization on manifolds is crucial in machine learning applications such as principal component analysis and low-rank matrix completion. 
However, federated solutions to machine learning problems with manifold constraints remain limited due to the complex geometric properties of manifolds and the challenges associated with designing and analyzing Federated Learning (FL) algorithms with nonconvex constraints.
%
In this paper, we consider nonconvex FL over compact smooth submanifolds with heterogeneous data. We propose a computationally and communication-efficient algorithm that utilizes stochastic Riemannian gradients and a manifold projection operator, uses local updates to improve communication efficiency, and avoids client drift. Theoretically, we prove sub-linear convergence to a neighborhood of a first-order optimal solution using novel analysis techniques that jointly leverage the structures of manifolds and the properties of loss functions.  Numerical experiments demonstrate that our algorithm requires significantly less computations and communications than existing methods. 
\end{abstract}
\fi 

\begin{abstract}
Many machine learning tasks, such as principal component analysis and low-rank matrix completion, give rise to manifold optimization problems.
% Optimization on manifolds is essential for machine learning tasks such as principal component analysis and low-rank matrix completion. 
% However, due to the complex geometric properties of manifolds and the challenges associated with designing and analyzing Federated Learning (FL) algorithms with nonconvex constraints, few federated solutions exist for these applications. 
Although there is a large body of work studying the design and analysis of algorithms for manifold optimization in the centralized setting, there are currently very few works addressing the federated setting.
In this paper, we consider nonconvex federated learning over a compact smooth submanifold in the setting of heterogeneous client data. We propose an algorithm that leverages stochastic Riemannian gradients and a manifold projection operator to improve computational efficiency, uses local updates to improve communication efficiency, and avoids client drift. Theoretically, we show that our proposed algorithm converges sub-linearly to a neighborhood of a first-order optimal solution by using a novel analysis that jointly exploits the manifold structure and properties of the loss functions.  Numerical experiments demonstrate that our algorithm has significantly smaller  computational and communication overhead than existing methods.
\end{abstract}
\section{Introduction}
Federated learning (FL), which 
\replaced{enables clients to collaboratively train models without exchanging their raw data}{
utilizes a server and multiple clients to collaboratively train models}, has gained significant traction in machine learning~\cite{li2020federated,kairouz2021advances}. The framework is appreciated for its capacity to leverage distributed data, accelerate the training process via parallel computation, and bolster privacy protection.
The majority of existing FL algorithms address problems that are either unconstrained or have convex constraints. However, for applications such as principal component analysis (PCA) and matrix completion, where model parameters are subject to nonconvex manifold constraints, there are very few options in the federated setting. 

\iffalse
{\color{blue} 
\begin{itemize}
    \item I am not sure about what you mean by "accelerate the training process" (compared to what?) \zjj{compared to centralized training without parallel computation}
    \item Below, you use [26] to refer to the Stiefel manifold, but this can hardly be a key reference for this observation?
    \zjj{Have cited \cite{chen2020proximal,wang2022bdecentralized} which focus on Stiefel manifold }
\end{itemize}
}
\fi 
In this paper, we study FL problems over manifolds in the form of
\begin{equation}\label{eqn:basic_opt}
\begin{aligned}
\operatorname*{minimize}_{x\in \Mcal \subset \mathbb{R}^{d \times k}} \; & f(x):=\frac{1}{n}\sum_{i=1}^n f_i(x),\quad f_i(x)=\frac{1}{m_i} \sum_{l=1}^{m_i} f_{il}(x;\mathcal{D}_{il}). 
\end{aligned}
\end{equation}
Here, $n$ is the number of clients, $x$ is the \replaced{matrix of model parameters}{decision variable}, and $\mathcal{M}$ is a compact smooth submanifold embedded in $\mathbb{R}^{d \times k}$.
For example, PCA-related optimization problems use the Stiefel manifold $\Mcal={\rm St}(d,k) = \{ x \in \mathbb{R}^{d\times k}: x^T x = I_k \}$ to maintain orthogonality \cite{chen2020proximal,wang2022bdecentralized}. In \eqref{eqn:basic_opt}, the global loss $f: \mathbb{R}^{d\times k} \to \mathbb{R}$ is smooth but nonconvex, and the local loss function $f_i$ of each client $i$ is the average of the losses $f_{il}$ on the $m_i$ data points in its \added{local} dataset $\mathcal{D}_i = \{\mathcal{D}_{i1}, \ldots, \mathcal{D}_{i m_i}\}$. In this paper, we consider a heterogeneous data scenario where the statistical properties of $\mathcal{D}_i$ differ across \replaced{clients.}{$i$.}

\replaced{Manifold optimization problems of the form~\eqref{eqn:basic_opt} appear in many important machine learning tasks,}{Problem \eqref{eqn:basic_opt} appears across various machine learning applications,} such as PCA \cite{ye2021deepca,chen2021decentralized}, low-rank matrix completion \cite{boumal2015low,kasai2019riemannian}, multitask learning \cite{tripuraneni2021provable,dimitriadis2023pareto}, and deep neural network training \cite{magai2023deep,yerxa2023learning}. 
\replaced{Still, there are very few federated algorithms for machine learning on manifolds. In fact, the work~\cite{li2022federated} appears to be the only FL algorithm that can deal with manifold optimization problems of a similar generality as ours}{
\zjj{However, FL on manifolds for solving \eqref{eqn:basic_opt} is still limited. To the best of our knowledge, the state-of-the-art work under similar settings is \cite{li2022federated}}}. Handling manifold constraints in an FL setting poses significant challenges: {\bf (i)} Existing single-machine methods for manifold optimization \cite{chen2020proximal,boumal2023introduction,hu2020brief} cannot be \replaced{directly adapted to the federated setting.}{efficiently scaled to FL scenarios.} Due to the distributed framework, the server has to average \replaced{the clients local models.}{each client’s local model.} Even if \added{each of} these \deleted{local} models \replaced{lies}{are} on the manifold, their average \replaced{typically does not due to the nonconvexity of $\Mcal$.}{may not be feasible.} \replaced{The current}{Current} literature relies on complicated geometric operators, \replaced{such as}{including} the exponential map, inverse exponential map, and parallel transport, to design \replaced{an averaging operator for the manifold~\cite{li2022federated}.}{the average operator over manifolds \cite{li2022federated}.} 
However, these mappings may not admit closed-form expressions and can be computationally expensive to evaluate. For example, to evaluate the inverse exponential map on the Stiefel manifold one needs to solve a nonlinear matrix equation, which is computationally challenging \cite{zimmermann2022computing}.
%However, the closed-form expressions of these mappings, e.g., the inverse exponential map on the Stiefel manifold, may not exist. 
{\bf (ii)} Extending \replaced{typical}{most existing} FL algorithms to scenarios with manifold constraints is not straightforward, either. \replaced{Most}{Since most} existing FL algorithms either are unconstrained \cite{li2019convergence,karimireddy2020scaffold} or only \replaced{allow for}{consider} convex constraints \cite{yuan2021federated, bao2022fast,tran2021feddr,wang2022fedadmm,zhang2024composite}, but manifold constraints are typically nonconvex. Moreover, compared to nonconvex optimization in Euclidean space, manifold optimization necessitates the consideration of the geometric structure of the manifold and properties of the loss functions, which poses challenges for algorithm design and analysis. {\bf (iii)} 
Traditional methods for enhancing communication efficiency in FL, like local updates~\cite{mcmahan2017communication}, need substantial modifications to accommodate manifold constraints.
%We consider using local updates to reduce the communication frequency between clients and the server, which is a conventional technique to improve communication efficiency for FL algorithms such as FedAvg \cite{mcmahan2017communication}.
The so-called client drift issue due to local updates and heterogeneous data~\cite{karimireddy2020scaffold} persists in the realm of manifold optimization. Directly using client-drift correcting techniques originally developed for Euclidean spaces \cite{karimireddy2020scaffold, karimireddy2020mime,mitra2021linear} could lead to additional communication or computational costs due to the manifold constraints. For instance, in \cite{li2022federated}, the correction term requires additional communication of local Riemannian gradients and involves using parallel transport to move the correction term onto some tangent space in preparation for the exponential mapping. 
Although some existing decentralized manifold optimization algorithms \cite{chen2021decentralized,deng2023decentralized,chen2024decentralized} can be simplified to an FL scenario with only one local update under the assumption of a fully connected network, these algorithms cannot be directly applied to FL scenarios with more than one local update, especially in cases of data heterogeneity. Extending the analysis of these algorithms to FL scenarios with multiple local updates is not straightforward. On the other hand, the use of local updates in FL, compared to these decentralized distributed algorithms, can more effectively reduce the number of communication rounds.
%\hj{Compared with the decentralized manifold optimization \cite{chen2021decentralized,wang2023decentralized,deng2023decentralized}, the local updates in FL could significantly reduce the total number of communication quantity under the same accuracy requirement.}

\iffalse
Federate learning over manifolds \eqref{eqn:basic_opt} is challenging due to the nonlinearity and nonconvexity of the manifold constraint. Such nonconvexity causes difficulties for algorithmic design and convergence analysis, e.g., the direct average over the local updates may not be feasible, both the manifold curvature and the properties of the loss functions should be considered for the convergence. 
It is worth noting that the recent paper \cite{li2022federated} focusing on \eqref{eqn:basic_opt}, where complicated geometric operators, e.g., exponential map, logarithm map, and parallel transport, are utilized to perform the average over the manifold. However, the close-form expressions of these mappings may not exist, e.g., the logarithm map on the Stiefel manifold. Our work focuses on implementation-friendly federated learning algorithms with convergence guarantees. 
\fi 
\vspace{-2mm}
\subsection{Contributions} 
\vspace{-2mm}
We consider the nonconvex FL problem \eqref{eqn:basic_opt} with $\mathcal{M}$ being a compact smooth submanifold and allow for heterogeneous data distribution among clients. Our contributions are summarized as follows. 

{\bf 1)} \replaced{We propose a federated learning algorithm for solving~\eqref{eqn:basic_opt} that is efficient in terms of both computation and communication.}{We propose a computationally and communication-efficient algorithm for solving \eqref{eqn:basic_opt}.} We employ stochastic Riemannian gradients and a projection operator \deleted{onto the manifold} to address manifold constraints, use local updates to reduce the communication frequency between clients and the server, and design correction terms to overcome client drift. 
In terms of server updates, our algorithm \replaced{ensures feasibility of all global model iterates and is computationally efficient since it avoids the}{of ensuring the feasibility of the global model using projection is computationally efficient, as it avoids the commonly used} techniques \added{used in} \cite{li2022federated} based on the exponential mapping and inverse exponential mapping for averaging local models on manifolds. \replaced{For}{On the other hand, for} local updates, our algorithm constructs \added{the} correction terms locally without increasing communication costs. In comparison, the approach presented in \cite{li2022federated} requires each client to transmit an additional local stochastic Riemannian gradient for constructing correction terms. Moreover, \cite{li2022federated} necessitates parallel transport to position \added{the} correction terms on tangent spaces \replaced{so that the exponential map can be applied}{for the application of the exponential map} to ensure the feasibility of local models, thereby increasing computational costs. In contrast, our algorithm utilizes a \added{simple} projection operator, effectively eliminating the need for parallel transport of correction terms.

{\bf 2)} Theoretically, we establish sub-linear convergence to a neighborhood of a first-order optimal solution and demonstrate \replaced{how this neighborhood depends}{the dependency of this neighborhood} on the stochastic sampling variance and algorithm parameters. Our analysis introduces novel proof techniques that utilize the curvature of the manifolds and the properties of the loss functions to overcome the challenges posed by the nonconvexity of manifold constraints in the nonconvex FL scenario. Compared to the existing work \cite{li2022federated} where analytical results are limited to cases where either the number of local updates is one or the number of participating clients per communication round is one, our theoretical results \replaced{allow for an arbitrary number of local updates and support full client participation}{are more general}.

{\bf 3)} Our algorithm demonstrates superior performance over alternative methods in the numerical experiments\replaced{. In particular, it produces high-accuracy results for kPCA and low-rank matrix completion at a significantly lower communication and computation cost than alternative algorithms.}{. Our algorithm can achieve high accuracy for kPCA and low-rank matrix completion problems with much less communication quantity and running time. }

\subsection{Related work\deleted{s}}
In this section, we first review \replaced{federated learning algorithms for composite optimization with and without constraints.}{composite FL algorithms in Euclidean space, where constrained problems are special cases.} Then, we discuss FL algorithms with manifold constraints.

\iffalse{\color{blue}
\begin{itemize}
    \item Below you use the term "bounded heterogeneity assumptions", but it is not clear from context what this is, and how it relates to the data heterogeneity that you allow for in your work.
    \zjj{bounded heterogeneity assumptions: $\|\nabla f_i -\nabla f\| \le C$. Have modified as ``but only for quadratic losses under the bounded heterogeneity assumption that the degree of data heterogeneity among clients is bounded. In contrast, we make no assumptions about the similarity of data across clients. '' }
    \iffalse
    \item The phrase "that uses a Riemannian average on the tangent space by utilizing the inverse of the exponential mapping and performs an exponential map to ensure the feasibility of the global model on the server." is not easy to understand. Can you try to rephrase?
    
\zjj{
``where the server maps the local models onto a tangent space, calculates an average, and then retracts the average back to the manifold. This process sequentially employs inverse exponential and exponential mappings.''

In the following text, we have a similar statement.

``Specifically, the server needs to map the local models onto a tangent space using inverse exponential mapping, calculates an average on the tangent space, and then performs exponential mapping to retract this average back onto the manifold.''}
\fi 
\end{itemize}
}\fi 
 
\textbf{Composite FL in Euclidean space.} Problem $\eqref{eqn:basic_opt}$ can be viewed as a special case of composite FL where the loss function is a composition of $f$ and the indicator function of $\mathcal{M}$. It is important to note that since the manifold is nonconvex, its indicator function is also nonconvex. Most existing composite FL methods can only handle convex constraints. The work \cite{yuan2021federated} \replaced{proposed}{proposes} a federated dual averaging method \replaced{and established its}{, which establishes the} convergence for a general loss function under bounded gradient assumptions, {but only for quadratic losses under the bounded heterogeneity assumption that the degree of data heterogeneity among clients is bounded. In contrast, we make no assumptions about the similarity of data across clients.} \replaced{The fast federated dual averaging algorithm}{Fast federated dual averaging} \cite{bao2022fast} extends the work in \cite{yuan2021federated} by using both past gradient information and past model information \replaced{in}{during} the local updates. However, the work \cite{bao2022fast} requires each client to transmit the local gradient as well as the local model, and it assumes bounded data heterogeneity. The work \cite{tran2021feddr} introduces the federated Douglas-Rachford method, and the work \cite{wang2022fedadmm} applies this algorithm to solve dual problems. Although these two methods avoid bounded data heterogeneity, they require an increasing number of local updates to ensure convergence, which reduces their practicality in federated learning. The recent work \cite{zhang2024composite} proposes a communication-efficient FL algorithm that overcomes client drift by \deleted{strategically} decoupling the proximal operator evaluation and \added{the} communication\replaced{ and shows that the method converges}{, establishing convergence} without any assumptions on data similarity.

\textbf{Federated learning on manifolds.} Since existing composite FL in Euclidean space only considers scenarios where the nonsmooth term in the loss functions is convex, these methods and their analyses cannot be directly applied to FL on manifolds. A typical challenge caused by the nonconvex manifold constraint is that \replaced{the average of local models, each of which lies on the manifold, may not belong to the manifold.}{averaging local models that are on the manifold may not be feasible.} To address this\added{ issue}, the work \cite{li2022federated} \replaced{introduced}{introduces} Riemannian federated SVRG (RFedSVRG), where the server maps the local models onto a tangent space, calculates an average, and then retracts the average back to the manifold. This process sequentially employs inverse exponential and exponential mappings. Moreover, 
%that uses a Riemannian average on the tangent space by utilizing the inverse of the exponential mapping and performs an exponential map to ensure the feasibility of the global model on the server.
RFedSVRG employs a correction term to overcome client drift but requires additional communication of local Riemannian gradients to construct the correction term\replaced{. In addition, the method uses parallel transport to position the correction term, which increases the computation cost even further.}{ and uses parallel transport to position the correction term, increasing both communication and computation costs.} The work~\cite{huang2024federated} explores the differential privacy of RFedSVRG. \replaced{Finally, the work~\cite{nguyen2023federated} considers the specific manifold optimization problem that appears in PCA  and investigates an ADMM-type method that penalizes the orthogonality constraint.}{ Additionally, \cite{nguyen2023federated} focuses on the special manifold optimization problem of principal component analysis (PCA), where an ADMM-type method is investigated by penalizing the orthogonality constraint.} However, this algorithm requires solving a subproblem to \deleted{the} desired accuracy, which increases computational cost. The work \cite{grammenos2020federated} introduces a differentially private FL algorithm for solving PCA.

\vspace{-2mm}
\section{Preliminaries}
\vspace{-2mm}
The \replaced{notation used in the paper is relatively standard and summarized in Appendix~\ref{app-notation}. Below, we focus on introducing fundamental definitions and inequalities for optimization on manifolds.}{notations of this paper are given in Appendix \ref{app-notation}. 
Let us introduce fundamental definitions and inequalities for optimization on manifolds. }
\subsection{Optimization on manifolds} 
\added{Manifold optimization aims to minimize a real-valued function over a manifold, i.e., $\min_{x \in \Mcal} ~ f(x).$ }
% Throughout the paper, we restrict our discussion to the Riemannian manifold, which consists of a manifold $\Mcal$ along with \replaced{a}{the} Riemannian metric; see \cite{absil2009optimization,boumal2023introduction} for details. For the considered compact submanifold \hj{embedded in the Euclidean space, which is a special class of Riemannian manifolds, including the Stiefel manifold, oblique manifold, and symplectic manifold.}
Throughout the paper, we restrict our discussion to embedded submanifolds of the Euclidean space, where the associated topology coincides with the subspace topology of the Euclidean space. We refer to these as embedded submanifolds. 
%\footnote{\hj{In this paper, all submanifolds are referred to as embedded submanifolds of the Euclidean space.}} 
Some examples of such manifolds include the Stiefel manifold, oblique manifold, and symplectic manifold~\cite{boumal2023introduction}. In addition, we always take the Euclidean metric as the Riemannian metric.
% we always take the Euclidean metric as the Riemannian metric.
% Whenever we refer to a manifold in this paper, we are specifically referring to a Riemannian manifold. 
\deleted{Manifold optimization aims to minimize a real-valued function over a manifold, i.e., $\min_{x \in \Mcal} ~ f(x).$ }
We define the tangent space of $\Mcal$ at point $x$ as $T_x \Mcal$, which contains all tangent vectors to $\Mcal$ at $x$, and the normal space as $N_{x} \mathcal{M}$ which is orthogonal to the tangent space.  
% Locally,  $\Mcal$ can be approximated by the tangent space $T_x \Mcal$ around the point $x$. 
%
With the definition of tangent space, we can define the Riemannian gradient that plays a central role in the characterization of \deleted{the} optimality condition\added{s} and \deleted{the} algorithm design \replaced{for}{in} manifold optimization.  
% Manifold optimization  \zjj{[?]}. 
\iffalse
{\color{blue}
\begin{itemize}
    \item in this section, you use  "submanifold" and "embedded submanifold" without explaining how they differ from a manifold; please clarify

    \hj{We refer to them as the embedded submanifold of the Euclidean space,  where the associated topology coincides with the subspace topology of the Euclidean space. Revised.}
    \item could we say "is the unique tangent vector that satisfies \dots differential of $f$."?
    
    \hj{Yes. Revised.}
\end{itemize}
}
\fi
\begin{definition}[Riemannian gradient $\grad  f(x)$]
 The Riemannian gradient $\grad  f(x)$ of a function $f$ at the point $x\in \mathcal{M}$ is the unique tangent vector that satisfies 
\be\nonumber  
\iprod{\grad f(x)}{\xi}_x = df(x)[\xi],\; \forall \xi\in T_x\mathcal{M},\ee  
where $\iprod{\cdot}{\cdot}_x$ is the Riemannian metric and $df$ denotes the differential of function $f$.  
\end{definition}

For a submanifold $\Mcal$, the Riemannian gradient $\grad f(x) $ (under the Euclidean metric) can be computed as \cite[Proposition 3.61]{boumal2023introduction}
$$\grad f(x) = \Pcal_{T_{x}\Mcal}(\nabla f(x)),$$
where $\Pcal_{T_{x}\Mcal}(\nabla f(x))$ represents the orthogonal projection of $\nabla f(x)$ onto $T_x \Mcal$. The Riemannian gradient $\grad f(x)$ reduces to the Euclidean gradient $\nabla f(x)$ when $\Mcal$ is the Euclidean space $\mathbb{R}^{d\times k}$.

\subsection{Proximal smoothness of $\Mcal$}

In our federated manifold learning algorithm, the server needs to fuse models that have undergone multiple rounds of local updates by the clients. Due to the nonconvexity of the manifold, the average of points on the manifold is not guaranteed to belong to the manifold. The tangent space-based exponential mapping or other retraction operations commonly used in manifold optimization are expensive in FL~\cite{li2022federated}. Specifically, the server needs to map the local models onto a tangent space using inverse exponential mapping, calculate an average on the tangent space, and then perform an exponential mapping to retract this average back onto the manifold. This exponential mapping, due to its dependency on the tangent space, also calls for parallel transport during the local updates when there are correction terms.  
To overcome this difficulty, we use a projection operator $\Pcal_{\mathcal{M}}$ defined by 
\begin{equation}
\begin{aligned}
\Pcal_{\Mcal}(x)\in \underset{u\in \mathcal{M}}{\operatorname{argmin}} ~\frac{1}{2}\|x-u\|^2
\end{aligned}    
\end{equation}
to ensure the feasibility of manifold constraints. It is worth noting that $\Pcal_{\Mcal}$ can be regarded as a special retraction operator when restricted to the tangent space~\cite{absil2012projection}. However, unlike a typical retraction operator, its domain is the entire space $\R^{d\times k}$, not just the tangent space, which enables a more practical averaging operation across clients in FL. 
Despite these advantageous properties, the nonconvex nature of the manifold means that $\Pcal_{\Mcal}(x)$ may be set-valued and non-Lipschitz,  making the use and analysis of $\Pcal_{\Mcal}$ in the FL setting highly nontrivial. To tackle this, we introduce the concept of proximal smoothness that refers to a property of a closed set, including $\Mcal$, where the projection becomes a singleton when the point is sufficiently close to the set.
%\hj{In addition to the above desired properties, there are potential challenges from the non-singleton and nonsmoothness of $P_{\Mcal}(x)$, as the manifold is a nonconvex set. Consequently, the usage of $P_{\Mcal}$  and its analysis in FL setting is highly nontrivial. To address this issue, we introduce the concept of proximal smoothness that refers to a property of a closed set, including $\Mcal$, where the projection becomes a singleton when the point is sufficiently close to the set.}
% Due to the nonconvexity of \( \mathcal{M} \), $P_{\Mcal}(x)$ might not be a singleton, which can pose challenges to the effectiveness and convergence analysis of the algorithm. We employ the concept of proximal smoothness that refers to a property of a closed set, including $\Mcal$, where the projection becomes a singleton when the point is sufficiently close to the set \cite{clarke1995proximal,davis2020stochastic, balashov2022error}.
\begin{definition}[$\hat{\gamma}$-proximal smoothness of $\mathcal{M}$] 
For any $\hat{\gamma}>0$, we define the $\hat{\gamma}$-tube around $\mathcal{M}$ as 
$$
U_{\mathcal{M}}(\hat{\gamma}): = \{x:{\rm dist}(x,\mathcal{M}) <  \hat{\gamma}\},$$ 
where $ {\rm dist} (x,\mathcal{M}):= \min_{u \in \Mcal} \| u -x \|$ is the Eulidean distance between $x$ and $\mathcal{M}$.
We say that $\Mcal$ is $\hat{\gamma}$-proximally smooth if the projection operator $\Pcal_{\mathcal{M}}(x)$ 
is a singleton whenever $x\in U_{\Mcal}(\hat{\gamma})$.  
\end{definition}
It is worth noting that any compact smooth submanifold $\Mcal$ embedded in $\mathbb{R}^{d\times k}$ is a proximally smooth set \cite{clarke1995proximal,davis2020stochastic}. The constant $\hat{\gamma}$ can be calculated with the method of supporting principle for proximally smooth sets \cite{balashov2019nonconvex,balashov2022error}.  For instance, the Stiefel manifold is $1$-proximally smooth. 
\begin{assumption}\label{asm-prox-smooth}
We assume that the proximal smoothness constant of $\Mcal$ is $2\gamma$.
\end{assumption}
With Assumption \ref{asm-prox-smooth}, we can ensure not only the uniqueness of the projection but also the Lipschitz continuity of the projection operator $\Pcal_{\Mcal}$ around $\Mcal$, analogous to the non-expansiveness of projections under Euclidean convex constraints.

{\bf Lipschitz continuity of $\Pcal_{\Mcal}$.} Define $\overline{U}_{\Mcal}(\gamma):=\{x: {\rm dist}(x,\Mcal) \leq  \gamma\}$ as the closure of $U_{\Mcal}(\gamma)$.  Following the proof in \cite[Theorem 4.8]{clarke1995proximal}, for a $2\gamma$-proximally smooth  $\mathcal{M}$, the projection operator $\Pcal_{\mathcal{M}}$ is  2-Lipschitz continuous over $\overline{U}_{\Mcal}(\gamma)$ such that
\be \label{eq:lip-proj-alpha}
\left\| \Pcal_{\mathcal{M}} (x) -\Pcal_{\mathcal{M}} (y)\right\| \leq 2 \|x - y\|,~~ \forall x,y \in \overline{U}_{\mathcal{M}}(\gamma).
\ee

{\bf Normal inequality.} In the normal space $N_{x} \mathcal{M}$, we exploit the so-called normal inequality \cite{clarke1995proximal,davis2020stochastic}. Following \cite{clarke1995proximal}, given a $2\gamma$-proximally smooth $\Mcal$,   for any $x\in \mathcal{M}$ and  $v \in$ $N_{x} \mathcal{M}$,  it holds that
\be \label{eq:normal-bound}
\iprod{v}{y-x} \leq \frac{\|v\|}{4\gamma} \|y-x\|^2, \quad \forall y \in \mathcal{M}. 
\ee
Intuitively, when $x$ and $y$ are close enough, the matrix $y-x$  is approximately in the tangent space, thus being nearly orthogonal to the normal space. %Hence, we observe that the right-hand side of \eqref{eq:normal-bound} includes $\|y-x\|^2$.
\vspace{-2mm}
\section{Proposed algorithm}
\vspace{-2mm}
%In this section, we develop a new algorithm for nonconvex FL on manifolds that is inspired by the algorithm in Euclidean space given by \cite{zhang2024composite}. 
In this section, we develop a novel algorithm for nonconvex federated learning on manifolds. The algorithm is inspired by the proximal FL algorithm for strongly convex problems in Euclidean space recently proposed in~\cite{zhang2024composite} but includes several non-trivial extensions. These include the use of Riemannian gradients and manifold projection operators and the ability to handle nonconvex loss functions, which call for a different convergence analysis. 
\iffalse
{\color{blue}
\begin{itemize}
    \item It would be good to strengthen this statement here. I would like to say something like "In this section, we develop a novel algorithm for nonconvex federated learning on manifolds. The algorithm is inspired by the proximal FL  algorithm for Euclidean space recently proposed in~\cite{zhang2024composite} but contains several non-trivial extensions. These include x, y, z."  \zjj{Adressed}
\end{itemize}
}
\fi
\begin{algorithm}[htbp]
\caption{Proposed algorithm}
\label{alg-fl}
\begin{algorithmic}[1]
\State $ \textbf{Input:}$ $R$, $ \tau$, $\eta$, $\eta_g$, $\teta=\eta \eta_g \tau$, $\ox^{1}$, and $c_i^1=0$ for all $i\in[n]$
\For {$r = 1, 2, \ldots, R$ }
\State {\bf Client $i$}
\State Set $\hz_{i, 0}^r=\Pcal_{\Mcal}(\ox^r)$ and $z_{i,0}^r=\Pcal_{\Mcal}(\ox^r)$
\For {$ t= 0, 1, \ldots, \tau-1$ }
\State Sample a mini-batch dataset $\mB_{i,t}^r \subseteq \md_i $ with $|\mB_{i,t}^r |=b$ 
\State Update $\grad f_i(z_{i,t}^r;\mB_{i,t}^r)=\frac{1}{b} \sum_{\md_{il}\in \mB_{i,t}^r}\grad f_{il}(z_{i,t}^r;\md_{il})$
\State Update 
$
\begin{aligned}
&\hz_{i,t+1}^r= 
\hz_{i,t}^r-\eta \left(\grad f_i(z_{i,t}^r;\mB_{i,t}^r)+c_i^r\right)
\end{aligned}
$
\State Update $z_{i,t+1}^r=\Pcal_{\Mcal}{\left(\hz_{i,t+1}^r\right)}$
\EndFor
\State Send $ \hz_{i,\tau}^r$ to the server
\State {\bf Server}
\State Update $\ox^{r+1}=\Pcal_{\Mcal}(\ox^r)+\eta_g\left(\tfrac{1}{n} \sum_{i=1}^{n} \hz_{i,\tau}^r     -\Pcal_{\Mcal}(\ox^r)\right)$ 
\State Broadcast $\ox^{r+1}$  to all the clients	
\State {\bf Client $i$}
\State Receive $\ox^{r+1}$ from the server
\State Update $c_i^{r+1}=\frac{1}{\eta_g\eta\tau}( \Pcal_{\Mcal}(\ox^{r})-\ox^{r+1})-\frac{1}{\tau} \sum_{t=0}^{\tau-1}\grad f_i(z_{i,t}^{r};\mB_{i,t}^{r})$
\EndFor
\State \textbf{Output:} $\Pcal_{\Mcal}(\ox^{R+1})$
\end{algorithmic}
\end{algorithm}
\vspace{-4mm}
\subsection{Algorithm description}
\vspace{-2mm}
The per-client implementation of our algorithm is detailed in Algorithm~\ref{alg-fl}.  Similarly to the well-known FedAvg, it operates in a federated learning setting with one server and \(n\) clients. Each client $i$ engages in $\tau$ steps of local updates before updating the server. We use \(r\) as the index of communication rounds and $t$ as the index of local updates. 

At any communication round \(r\), client \(i\) downloads the global model \(x^r\) from the server and computes \(\Pcal_{\Mcal}(x^r)\). Each client \(i\) updates two local variables, \(\hat{z}_{i,t}^r\) and \(z_{i,t}^r\), where \(\hat{z}_{i,t}^r\) aggregates the Riemannian gradients from local updates, and \(z_{i,t}^r = \Pcal_{\Mcal}(\hat{z}_{i,t}^r)\) ensures that Riemannian gradients can be computed at points on $\Mcal$. The update of \(\hat{z}_{i,t}^r\) is given in Line 8, where  \(\mB_{i,t}^r\) is a mini-batch dataset and \(c_i^r\) is a correction term to eliminate client drift.
After \(\tau\) local updates, client \(i\) sends \(\hat{z}_{i,\tau}^r\) to the server.

The server receives all \(\hat{z}_{i,\tau}^r\), computes their average to form the global model \(x^{r+1}\) following Line 13, 
and broadcasts \(x^{r+1}\) to each client $i$ that uses \(x^{r+1}\) to locally construct the correction term \(c_i^{r+1}\).

In the proposed algorithm, each client \(i\) downloads \(x^r\) at the start of local updates and uploads \(\hat{z}_{i,\tau}^r\) at the end of the local updates. Therefore, each communication round involves each client and the server exchanging only a single $d \times k$ matrix.

\iffalse
{\color{blue}
\begin{itemize}
    \item is the decision variable always a $d\times k$ matrix, or are there manifold problems with other structure on the decision variable?

    \zjj{Yes, since we have $\operatorname*{minimize}_{x\in \Mcal \subset \mathbb{R}^{d \times k}}$ in \eqref{eqn:basic_opt} }
\end{itemize}
}
\fi 
\vspace{-2mm}
\subsection{Algorithm intuition and innovations} \label{sec:intuition}
To better understand the proposed algorithm, we present its equivalent and more compact form:
\begin{equation}\label{eqn:illustration}
\hspace{-3mm}
\left\{
\begin{aligned}
\hbz_{t+1}^r &=\hbz_t^r-\eta \Big({\grad \mathbf{f}}\left(\bz_t^r; \mB_{t}^r\right)+\frac{1}{\tau} \sum_{t=0}^{\tau-1} \onbf\left(\bz_t^{{r-1}} ; \mB_{t}^{r-1} \right)- \frac{1}{\tau} \sum_{t=0}^{\tau-1} {\grad \mathbf{f}} \left(\bz_t^{{r-1}} ; \mB_{t}^{r-1}\right)  \Big),   \\
\bz^r_{t+1}&=\Pcal_{\Mcal}\left(\hbz^{r}_{t+1} \right), \\
\bx^{r+1} &= \Pcal_{\Mcal} (\bx^{r})-{\eta_g}\eta \sum_{t=0}^{\tau-1} \overline{\grad \mathbf{f}}\left(\bz_t^r ; \mB_{t}^r\right), 
\end{aligned}
\right.
\end{equation}
where the notations are in Appendix~\ref{app-notation}. 
For the initialization of correction term, we set ${\grad f_i} \left(z_{i,t}^{{0}}; \mathcal{B}_{i,t}^{0}\right) =0$ for all $t$ and $i$ so that $\frac{1}{\tau} \sum_{t=0}^{\tau-1} \onbf\left(\bz_t^{0} ; \mB_{t}^0 \right)- \frac{1}{\tau} \sum_{t=0}^{\tau-1} {\grad \mathbf{f}} \left(\bz_t^{0} ; \mB_{t}^{0}\right)=0$, which coincides with the initialization $c_i^1=0$ in Algorithm \ref{alg-fl}. 
The equivalence between Algorithm~\ref{alg-fl} and \eqref{eqn:illustration} can be proved following the similar derivations in \cite{zhang2024composite} and is therefore omitted. 

With \eqref{eqn:illustration}, we highlight the \replaced{key properties and}{the} innovations of the proposed algorithm.

{\bf 1) \replaced{Recovery of}{Recover} the centralized algorithm in special cases.} 
Substituting the definitions of $\bx$,  $\overline{\grad \mathbf{f}}\left(\bz_t^r ; \mB_{t}^r\right)$, and $\teta$ into the last step in \eqref{eqn:illustration}, we have
\begin{equation}\label{eq:bar-27}
\begin{aligned}
\Pcal_{\Mcal}(\ox^{r+1})=\Pcal_{\Mcal}\Big( \Pcal_{\Mcal}(\ox^r)-\teta \underbrace{\frac{1}{n\tau} \sum_{i=1}^n\sum_{t=0}^{\tau-1} \left( \grad f_i\left(z_{i, t}^r;\mB_{i,t}^r\right)\right)}_{:=v^r}\Big).
\end{aligned}
\end{equation}	
Thanks to the introduction of the variable \(\hat{z}_{i,t}^r\) during the local updates for each client $i$ in Algorithm \ref{alg-fl}, the server after averaging \(\hat{z}_{i,\tau}^r\) obtains an accumulation of \(\tau\) local Riemannian gradients across local updates and an average of the local Riemannian gradients across all clients. In the special case where \(\tau=1\) and \(b=m_i\), i.e., with the local full Riemannian gradient for each client $i$, the update of \eqref{eq:bar-27} recovers the centralized projected Riemannian gradient descent (C-PRGD) 
\begin{equation}\label{eq:pgd}
\begin{aligned}
\tilde{x}^{r+1}
: =\Pcal_{\Mcal} \left( \Pcal_{\Mcal}(\ox^r)-\teta \grad f(\Pcal_{\Mcal}(\ox^r))\right).
\end{aligned}
\end{equation}
In our analysis, we will compare the sequence $\Pcal_{\Mcal}(x^{r+1})$ generated by our algorithm with the virtual iterate $\tilde{x}^{r+1}$ to establish the convergence of our algorithm. 

{\bf 2) \replaced{Feasibility of all iterates at a low computational cost.}{Efficient computations for solution feasibility.}}  
Our algorithm uses  \(\mathcal{P}_{\mathcal{M}}\) to obtain feasible solutions on the manifold, which is computationally more efficient than the commonly used exponential mapping. In fact, since the exponential mapping relies on a point on the manifold and the tangent space at that point, it cannot be directly used in our algorithm. In the local updates, it is difficult to perform exponential mapping on \(\hat{\bz}_{t+1}^r\) because \(\hat{\bz}_t^r\) is not on the manifold; see the first step in \eqref{eqn:illustration}. As shown in \cite{zhang2024composite},  \(\hat{\bz}_{t+1}^r\) is essential for the server to obtain aggregated Riemannian gradients from \(n\) clients after \(\tau\) local updates. Moreover, at the server, although \(\Pcal_{\Mcal}(\bx^r)\) is on the manifold, the aggregated direction does not lie in the tangent space at \(\Pcal_{\Mcal}(\bx^r)\). The algorithm suggested in~ \cite{li2022federated} uses an exponential mapping to fuse local models. It needs to map the local models to a tangent space using the inverse exponential mapping and then retract the result back to the manifold, which is computationally expensive. 
Our use of \(\mathcal{P}_{\Mcal}\) on a point in the Euclidean space close to the manifold avoids these high computational costs, but creates new challenges for the analysis.

%Our algorithm utilizes the projection operator $\Pcal_{\Mcal}$ to obtain feasible solutions on the manifold. In contrast to the related work \cite{li2022federated}, which uses exponential mapping and average on tangent spaces to ensure the feasibility of the solutions, our method is more natural and computationally more efficient. Specifically, in \cite{li2022federated}, although each client sends feasible local variables to the server, their average may not lie on the manifold. To address this issue, the authors in \cite{li2022federated} first map the local variables back to the same tangent space using the inverse exponential map,   average these tangent vectors, and apply the exponential map to obtain a feasible global model.  The uses of the exponential map and its inverse increase the computational overhead. Moreover, in practical implementations, when the exponential mapping and its inverse do not have closed-form expressions, approximations are necessary as described in \cite{li2022federated}. 

{\bf 3) Overcoming client drift.} Inspired by \cite{zhang2024composite}, we use a correction term $c_i^r$ to address client drift. According to the first step of \eqref{eqn:illustration}, the correction employs the idea of ``variance reduction'', which involves replacing the old local Riemannian gradient $\frac{1}{\tau} \sum_{t=0}^{\tau-1} {\grad \mathbf{f}} \left(\bz_t^{{r-1}}; \mB_{t}^{r-1}\right)$ with the new one ${\grad \mathbf{f}}\left(\bz_t^r; \mB_{t}^r\right)$ in the average of all client Riemannian gradients $\frac{1}{\tau} \sum_{t=0}^{\tau-1} \onbf\left(\bz_t^{{r-1}}; \mB_{t}^{r-1} \right)$, where the ``variance'' refers to the differences in Riemannian gradients among clients caused by data heterogeneity. Compared to \cite{li2022federated}, our correction improves communication and computation. The correction approach in \cite{li2022federated} necessitates extra transmissions of local Riemannian gradients, while our correction term can be locally generated, leading to a significantly reduced communication overhead. Furthermore, the work \cite{li2022federated} employs parallel transport to position the correction term with a specific tangent space for the exponential mapping to ensure local model feasibility. Our approach, which utilizes $\Pcal_{\Mcal}$, eliminates the need for parallel transport and reduces the computations per iteration even further.

\iffalse
{\color{blue}
\begin{itemize}
    \item the text above are nice, but they are a little bit repetitive (many of these points have been said in almost the same words earlier). I believe that they can, at least partly, be re-phrased and reduced. \zjj{Addressed}
\end{itemize}
}
\fi

\vspace{-2mm}
\section{ Analysis} 
\label{sec:analysis} 
\vspace{-2mm}
In this section, we analyze the convergence of the proposed Algorithm \ref{alg-fl}. Throughout the paper, we make the following assumptions, which are common in manifold optimization. 
\begin{assumption}\label{asm-smooth} 
Each $f_{il}(x;\mathcal{D}_{il}): \mathbb{R}^{d\times k} \mapsto \mathbb{R}$ has $\hat{L}$-Lipschitz continuous gradient $\nabla f_{il}(x;\mathcal{D}_{il})$ on the convex hull of $\mathcal{M}$, denoted by $\operatorname{conv}(\mathcal{M})$, i.e., for any $x, y \in \operatorname{conv}(\mathcal{M})$, it holds that
\begin{equation}
\|\nabla f_{il}(x;\mathcal{D}_{il})-\nabla f_{il}(y;\mathcal{D}_{il}) \|\le \hat{L} \|x-y\|. 
\end{equation} 
\end{assumption}
With the compactness of $\Mcal$,  there exists a constant $D_f>0$ such that the Euclidean gradient $\nabla f_{il}(x;\mathcal{D}_{il})$ of $f_{il}$ is bounded by $D_f$, i.e., $\max _{i,l,x \in \mathcal{M}}\left\|\nabla f_{il}(x;\mathcal{D}_{il})\right\| \leq D_f$. It then follows from \cite[Lemma 4.2]{deng2023decentralized} that there exists a constant $\hat{L} \leq L < \infty$ such that for any $x, y \in \Mcal$,
\begin{equation*}
\begin{aligned}
& f_{il}(y;\mathcal{D}_{il}) \le f_{il}(x;\mathcal{D}_{il})+ \langle \grad f_{il}(x;\mathcal{D}_{il}), y-x\rangle+\frac{L}{2}\|x-y\|^2, \\
& \|\grad f_{il}(x;\mathcal{D}_{il}) - \grad f_{il}(y;\mathcal{D}_{il})\| \leq L\|x -y\|. 
\end{aligned}
\end{equation*}

To address the stochasticity introduced by the random sampling \( \mathcal{B}_{i,t}^r \), we define \( \mathcal{F}_t^r \) as the \(\sigma\)-algebra generated by the set \( \{\mathcal{B}_{i,\tilde{t}}^{\tilde{r}} \mid i \in [n], \tilde{r} \in [r], \tilde{t} \in [t-1]\} \) and make the following assumptions regarding the stochastic Riemannian gradients, similar to \cite[Assumption 2]{zhou2019faster}.

\begin{assumption}\label{asm-sgd}
Each stochastic Riemannian gradient $\grad f_i(z_{i,t}^{r}; \mB_{i,t}^r) $ in Algorithm \ref{alg-fl} satisfies 
\begin{equation}
\begin{aligned}
&\bE\left[\grad f_i(z_{i,t}^{r}; \mB_{i,t}^r) |  \mathcal{F}_t^r\right]=\grad f_i(z_{i,t}^{r}), \\
&\bE \left[\left\|\grad f_i(z_{i,t}^{r}; \mB_{i,t}^r) -\grad f_i(z_{i,t}^{r})  \right\|^2 |  \mathcal{F}_t^r\right] \le \frac{\sigma^2}{b}. 
\end{aligned}
\end{equation}
\end{assumption}

Considering the nonconvexity of \(f\) and the manifold constraints, we characterize the first-order optimality of \eqref{eqn:basic_opt}. A point \(x^{\star}\) is defined as a first-order optimal solution of \eqref{eqn:basic_opt} if \(x^\star \in \Mcal\) and \(\grad f(x^{\star}) = 0\). We employ the norm of \(\mathcal{G}_{\teta}(\Pcal_{\Mcal}(x^r))\) as a suboptimality metric, defined as
\be \label{eq:station-measure}
\mathcal{G}_{\teta}(\Pcal_{\Mcal}(x^r)) :=  (\Pcal_{\Mcal}(x^r) - \tilde{x}^{r+1})/{\teta},
\ee
and \(\tilde{x}^{r+1}\) defined in \eqref{eq:pgd} is used only for analytical purposes. In optimization on Euclidean space such that $\Mcal=\mathbb{R}^{d\times k}$, the quantity \(\mathcal{G}_{\teta}(\Pcal_{\Mcal}(x^r))\) serves as a widely accepted metric to assess first-order optimality for nonconvex composite problems \cite{j2016proximal}. In optimization on manifold, we have \(\mathcal{G}_{\teta}(\Pcal_{\Mcal}(x^r)) = 0\) if and only if $\grad f(\Pcal_{\Mcal}(x^r)) = 0$ for any $\teta>0$. {Moreover, for a suitable $\teta$, we show that $1/2 \| \grad f(\Pcal_{\Mcal}(x^r)) \|\le \|\mathcal{G}_{\teta}(\Pcal_{\Mcal}(x^r))\| \le 2\| \grad f(\Pcal_{\Mcal}(x^r)) \|$; see Lemmas \ref{lem-glob-lips} and \ref{lem:two-grad-equ}.}
With $\mathcal{G}_{\teta}(\Pcal_{\Mcal}(\ox^r))$, we have the following theorem.   
\begin{theorem} \label{thm:noncvx}
Under Assumptions \ref{asm-prox-smooth}, \ref{asm-smooth}, and  
\ref{asm-sgd}, if the step sizes satisfy 
\begin{equation}\label{eq:step-sizes}
\begin{aligned}
\teta:=\eta_g \eta\tau\le\min\left\{ {\frac{1}{24ML}}, \frac{\gamma}{6D_f}, {\frac{1}{D_f L_\Pcal}} \right\},~ \eta_g=\sqrt{n},
\end{aligned}
\end{equation}
where $M=\max \big\{{\rm diam}(\Mcal)/{\gamma}, {2} \big\}$, ${\rm diam}(\Mcal)= \max_{x, y \in \Mcal}\! \|x-y\|$, $D_f=\max_{i,l, x \in  \Mcal} \|\nabla f_{il}(x;\mathcal{D}_{il})\|$, and $L_{\Pcal}= \max_{x \in \overline{U}_{\Mcal}(\gamma)} \|D^2 \Pcal_{\Mcal}(x)\|$,
then the sequence $\Pcal_{\Mcal}(x^r)$  generated by  Algorithm \ref{alg-fl} satisfies
\begin{equation}\label{eq-thm}
\begin{aligned}
\frac{1}{R} \sum_{r=1}^R\bE \| \mathcal{G}_{\teta}(\Pcal_{\Mcal}(\ox^r))\|^2 \le \frac{8\Omega^{1}}{{\sqrt{n} \eta \tau R}}  +  \frac{64\sigma^2}{n\tau b},
\end{aligned}
\end{equation}
where $\Omega^{1}>0$ is a constant related to initialization. 
\end{theorem}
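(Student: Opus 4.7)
The plan is to compare the algorithm's sequence $\Pcal_{\Mcal}(\ox^r)$ with the virtual centralized iterate $\tilde{x}^{r+1}$ from \eqref{eq:pgd}, which is one step of projected Riemannian gradient descent on $f$. By the definition of the stationarity metric, $\teta \,\mathcal{G}_{\teta}(\Pcal_{\Mcal}(\ox^r)) = \Pcal_{\Mcal}(\ox^r) - \tilde{x}^{r+1}$, so controlling $\|\mathcal{G}_\teta\|^2$ amounts to showing that the aggregated direction $v^r$ in \eqref{eq:bar-27} is a sufficiently accurate surrogate for $\grad f(\Pcal_{\Mcal}(\ox^r))$. The first thing I would verify is that the step size condition \eqref{eq:step-sizes}, combined with the bound $\|\nabla f_{il}\|\le D_f$ and the 2-Lipschitzness \eqref{eq:lip-proj-alpha}, keeps $\ox^{r+1}$ and every local iterate $\hz_{i,t}^r$ inside the $\gamma$-tube $\overline{U}_\Mcal(\gamma)$ throughout all rounds, which is what makes \eqref{eq:lip-proj-alpha} applicable in every subsequent bound.

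Next, I would derive a descent-type inequality for $f\circ \Pcal_\Mcal$ along the global iterates. Using the $L$-smoothness of $f$ on $\Mcal$ (from Assumption~\ref{asm-smooth}), the server update \eqref{eq:bar-27}, the normal inequality \eqref{eq:normal-bound}, and the three-point lemma (Lemma~\ref{lem:3point}) that extends the Euclidean one to proximally smooth sets, the update can be turned into
\begin{equation*}
\bE f(\Pcal_\Mcal(\ox^{r+1})) \le \bE f(\Pcal_\Mcal(\ox^r)) - c_1\teta\,\bE\|\mathcal{G}_\teta(\Pcal_\Mcal(\ox^r))\|^2 + c_2\teta\,\bE\|v^r - \grad f(\Pcal_\Mcal(\ox^r))\|^2 + c_3\teta^2 \sigma^2/(n\tau b),
\end{equation*}
where the constants $c_1,c_2,c_3$ are determined by $L,\gamma,L_\Pcal$, and $M$. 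The factor $\eta_g=\sqrt{n}$ appears here because the server step size $\teta = \eta_g\eta\tau$ is calibrated to yield an unbiased variance reduction of order $1/n$ after averaging.

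The gradient-error term $\bE\|v^r - \grad f(\Pcal_\Mcal(\ox^r))\|^2$ is then split into: (a) client drift $\bE\|z_{i,t}^r - \Pcal_\Mcal(\ox^r)\|^2$, (b) stochastic mini-batch noise controlled by Assumption~\ref{asm-sgd}, and (c) a bias from the correction $c_i^r$ storing stale information from round $r{-}1$. The drift is bounded by unrolling the local step $\hz_{i,t+1}^r = \hz_{i,t}^r - \eta(\grad f_i(z_{i,t}^r;\mB_{i,t}^r)+c_i^r)$, using $L$-Lipschitz Riemannian gradients on $\Mcal$, and passing through $\Pcal_\Mcal$ by \eqref{eq:lip-proj-alpha}; the constraint $\teta\le \gamma/(6D_f)$ is what makes this recursion contractive. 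For the correction-bias term, the identity for $c_i^{r+1}$ in Line 18 of Algorithm~\ref{alg-fl} together with the equivalent form \eqref{eqn:illustration} shows that $c_i^r$ plays the role of a SARAH/SVRG-type variance reducer, so $\|c_i^r - (\grad f_i - \grad f)(\Pcal_\Mcal(\ox^{r-1}))\|$ telescopes through consecutive rounds.

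The main obstacle, and the part requiring the most care, is designing a Lyapunov function that absorbs the stale-correction term into the descent inequality. I would use
\begin{equation*}
V^r = \bE f(\Pcal_\Mcal(\ox^r)) - f^\star + \alpha\,\bE\bigl\| c^{r} - \tfrac{1}{\tau}\sum_{t=0}^{\tau-1}\grad \mathbf{f}(z_t^{r-1})\bigr\|^2,
\end{equation*}
choose $\alpha$ so that the drift and bias terms in the descent inequality are dominated by the negative progress term, and telescope over $r=1,\ldots,R$. The equivalence $\tfrac{1}{2}\|\grad f(\Pcal_\Mcal(\ox^r))\|\le \|\mathcal{G}_\teta(\Pcal_\Mcal(\ox^r))\|\le 2\|\grad f(\Pcal_\Mcal(\ox^r))\|$ from Lemmas~\ref{lem-glob-lips} and~\ref{lem:two-grad-equ} then lets us freely interchange the two quantities when simplifying constants. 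Dividing by $R$ and plugging in $\eta_g = \sqrt{n}$ yields the $8\Omega^1/(\sqrt{n}\,\eta\tau R)$ optimization term and the $64\sigma^2/(n\tau b)$ stochastic noise floor of \eqref{eq-thm}, with $\Omega^1$ collecting $V^1$ and the boundary contributions from the initialization $c_i^1=0$.
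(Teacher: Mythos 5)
Your proposal follows essentially the same route as the paper's proof: comparison with the virtual C-PRGD iterate, the three-point lemma built on proximal smoothness and the normal inequality, a drift bound made contractive by the step-size restrictions, a Lyapunov function coupling the function-value gap with a correction-variance term (the paper's $\tfrac{1}{n\teta}\|{\bf \Lambda}^r-\overline{{\bf \Lambda}}^r\|^2$ plays exactly the role of your $\alpha$-weighted term), and the equivalence between $\mathcal{G}_{\teta}$ and $\grad f$ from Lemma~\ref{lem:two-grad-equ}. The plan is sound and matches the paper's argument in all essential respects.
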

In Theorem \ref{thm:noncvx}, the first term on the right hand of \eqref{eq-thm} converges at a sub-linear rate, which is common for constrained nonconvex optimization \cite{j2016proximal,zaheer2018adaptive}. The second term is a constant error caused by the variance $\sigma^2$ of stochastic Riemannian gradients.

{\bf Theoretical contributions.}
The work \cite{li2022federated} establishes convergence rates of \(\mathcal{O}(1/R)\) for \(\tau=1\) and \(\mathcal{O}(1/(\tau R))\) for \(\tau>1\) but only if a single client participates in the training per communication round. In contrast, our Theorem \ref{thm:noncvx} achieves a rate of \(\mathcal{O}(1/(\sqrt{n}\tau R)\) for $\tau>1$ and full client participation. Our theorem indicates that multiple local updates enable faster convergence, which distinguishes our algorithm from decentralized manifold optimization algorithms~\cite{chen2021decentralized, deng2023decentralized} that limit clients to do a single local update. Our convergence analysis relies on several novel techniques.   
Specifically, we capitalize on the structure of \(\Mcal\) and exploit the proximal smoothness of \(\Mcal\) to guarantee the uniqueness of \(\Pcal_{\Mcal}\) and Lipschitz continuity of \(\Pcal_{\Mcal}\) within a tube around \(\Mcal\). Additionally, we carefully select the step sizes to ensure that the iterates remain close to $\Mcal$, thus preserving the established properties throughout the iterations. Last but not least, we select an appropriate first-order optimality metric (see \eqref{eq:station-measure}) and jointly consider the properties of \(\Mcal\) and the loss functions to establish some new inequalities for the convergence of this metric, given in Appendix.
 
\section{Numerical experiments}\label{sec:experiment}
In this section, we conduct numerical experiments on two applications on the Stiefel manifold: kPCA and the low rank matrix completion (LRMC).  We compare with existing algorithms, including RFedavg, RFedprox, and RFedSVRG. RFedavg and RFedprox are direct extensions of FedAvg \cite{mcmahan2017communication} and Fedprox \cite{li2020federated}. For RFedSVRG, there are no theoretical guarantees when we set $\tau>1$ and make all clients participate.
In all alternative algorithms, the calculations of the exponential mapping, its inverse, and the parallel transport on the Stiefel manifold are needed. The exponential mapping has a closed-form expression but involves a matrix exponential~\cite{chen2020proximal}, the inverse exponential mapping needs to solve a nonlinear matrix equation~\cite{zimmermann2022computing}, and the parallel transport needs to solve a linear differential equation~\cite{edelman1998geometry}, all of which are computationally challenging. In their implementations, approximate versions of these mappings are used \cite{li2022federated,boumal2014manopt, townsend2016pymanopt}.

\iffalse
\hj{In all baseline algorithms, the calculations of the exponential map, its inverse, and the parallel transport on the Stiefel manifold are needed. The exponential map has the following closed-form expression:
$$ \exp_X(D)=[X, Q] \exp \left(\left[\begin{array}{cc}
-X^{\top} D & -R^{\top} \\
R & 0
\end{array}\right]\right)\left[\begin{array}{c}
I_k \\
0
\end{array}\right],
$$ where $Q R=-\left(I_d-X X^{\top}\right) D$ is the QR decomposition of $-\left(I_d-X X^{\top}\right) D$. The computation of $\exp_X(D)$ involves a matrix exponential $\exp$. For the inverse exponential map, one needs to solve a nonlinear matrix equation, which is computationally challenging \cite{zimmermann2022computing}. Regarding the parallel transport, it corresponds to solve a linear differential equation. It remains an open question whether this can be accomplished with $\mathcal{O}(dk^2)$ operations \cite{edelman1998geometry}. In their implementations, approximate versions of these maps are used.
}
\fi 

\vspace{-3mm}
\paragraph{kPCA.}
Consider the kPCA problem 
\begin{equation*}
\underset{{x}\in \text{St}(d,k)}{\operatorname{minimize}}~ f(x)= \frac{1}{n}\sum_{i=1}^{n} f_i(x), \quad f_i(x)=-\frac{1}{2}\text{tr}(x^TA_i^T A_ix),
\end{equation*}
where $\text{St}(d,k)=\{ x\in \mathbb{R}^{d\times k} \mid x^Tx=I_k\}$ denotes the Stiefel manifold, and $A_i^T A_i \in \mathbb{R}^{d\times d}$ is the covariance matrix of the local data $A_i\in\mathbb{R}^{p\times d}$ of client $i$. 
We conduct experiments where the matrix $A_i$ is from the Mnist dataset. The specific experiment settings can be found in Appendix \ref{app-kpca}. 

In the first set of experiments, we compare with RFedavg, RFedprox, and RFedSVRG. Note that RFedSVRG requires each client to transmit two $d\times k$ matrices at each communication round, while our algorithm only transmits a single matrix. We use communication quantity to count the total number of $d\times k$ matrices that per client transmits to the server.   We use the local full gradient $\nabla f_i$ to mitigate the effects of stochastic gradient noise. 
In Fig.~\ref{fig-Mnist-compare}, we set the number of local steps as $\tau=10$ and the step size as $\eta = 1/{\beta}$ for all algorithms, where $\beta$ is the square of the largest singular value of $\col\{A_i\}_{i=1}^n$. For our algorithm, we set $\eta_g = 1$.
It can be observed that RFedavg and RFedprox face the issue of client drift and have low accuracy. Both RFedSVRG and our algorithm can overcome the client drift, but our algorithm, though being similar in terms of communication rounds, is much faster in terms of both communication quantity and running time.
\begin{figure}[htbp]
    \centering
    \subfigure{
\includegraphics[width=0.96\textwidth]{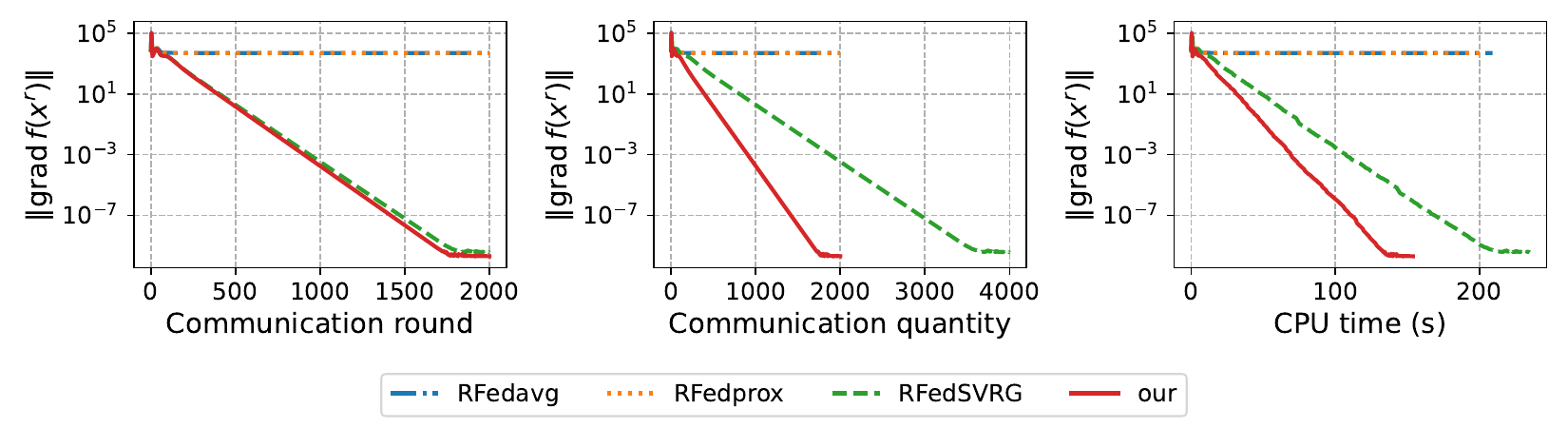}
        \label{fig:Mnist-chart1}
    }
    \vspace{-4mm}
    \caption{kPCA problem with Mnist dataset: Comparison on $\|\grad f(x^r)\|$.}
    \label{fig-Mnist-compare}
\end{figure}

In the second set of experiments, we test the impact of $\tau$. For all the algorithms, we set the step size $\eta = 1/{\beta}$ and $\tau \in \{10,15,20\}$. For our algorithm, we set $\eta_g = 1$. The experiment results are shown in Fig.~\ref{fig-tau-kPCA-Mnist}.  For all values of $\tau$, our algorithm achieves better convergence and requires less communication quantity.  
\begin{figure}[htbp]
    \centering
    \includegraphics[width=0.96\textwidth]{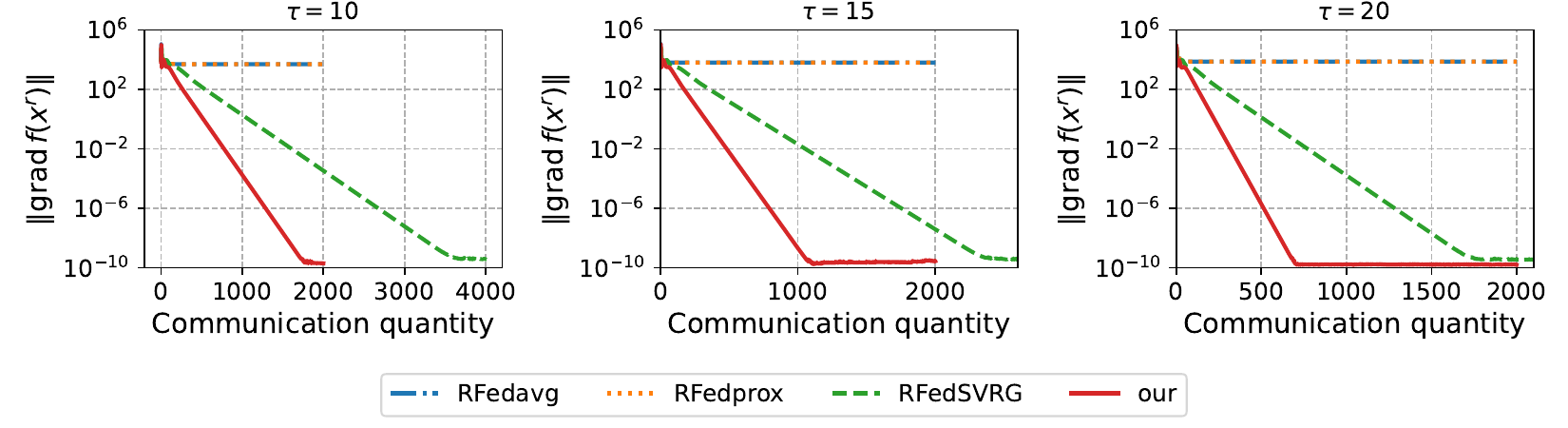}
    \vspace{-2mm}
\caption{kPCA with Mnist dataset: The impacts of $\tau$.}
\label{fig-tau-kPCA-Mnist}
\end{figure}

In addition, we test the impact of stochastic Riemannian gradients with different batch sizes. We set $\eta=1/(20\beta)$.  As shown in Fig.~\ref{fig-sto}, our algorithm converges to a neighborhood due to the sampling noise and larger batch size leads to faster convergence. 
\begin{figure}[htbp]
    \centering
    \includegraphics[width=0.96\textwidth]{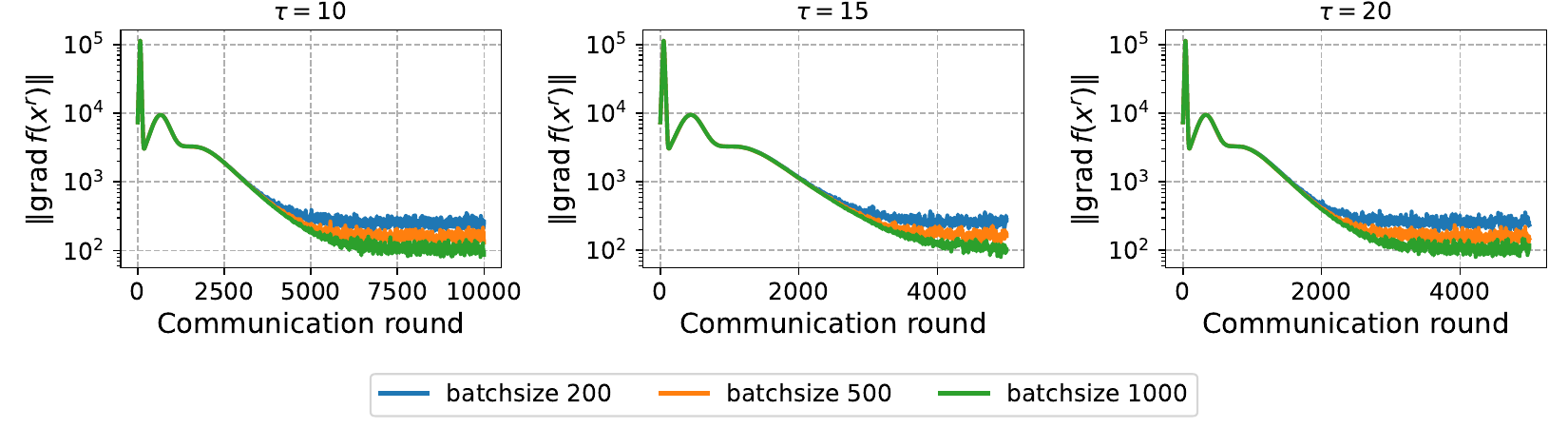}
     \vspace{-2mm}
\caption{kPCA with Mnist dataset: The impacts of stochastic Riemannian gradients.}
\label{fig-sto}
\end{figure}

\vspace{-4mm}
\paragraph{Low-rank matrix completion (LRMC).} LRMC aims to recover a low-rank matrix $A \in \R^{d \times T}$ from its partial observations. Let $\Omega$ be the set of indices of known entries in $A$, the rank-$k$ LRMC problem can be written as
$  \operatorname*{minimize}_{X \in {\rm St}(d,k), V\in \R^{k \times T}}\frac{1}{2} \| \Pcal_{\Omega}(XV- A) \|^2, $
where the projection operator $\Pcal_{\Omega}$ is defined in an entry-wise manner with $(\Pcal_{\Omega}(A))_{l_1 l_2} = A_{l_1 l_2}$ if $(l_1, l_2) \in \Omega$ and $0$ otherwise. In terms of the FL setting, we consider the case where the observed data matrix $\Pcal_{\Omega}(A)$ is equally divided into $n$ clients by columns, denoted by $A_1, \ldots, A_n$. Then, the FL LRMC problem is
\be \label{prob:lrmc}
\operatorname*{minimize}_{X \in {\rm St}(d,k)} \;\; \frac{1}{2n } \sum_{i=1}^n \| \Pcal_{\Omega_i}(X V_i(X) - A_i) \|^2,  
\ee
where $\Omega_i$ is the subset corresponding to client $i$ in $\Omega$ and $V_i(X):= {\rm argmin}_{V} \| \Pcal_{\Omega_i} (XV - A_i)\|$.
In the experiments, we set $T=1000$, $d=100$, $k=2$, $n=10$, and use the local full gradients. The other settings can be found in Appendix \ref{matrix-app}.  

The numerical comparisons with RFedavg, RFedprox, and RFedSVRG are presented in Figs.~\ref{fig-mc-compare}. Our algorithm and RFedSVRG achieve similar convergence for communication rounds, but our algorithm converges faster than RFedSVRG in terms of communication quantity and running time. 
\vspace{-2mm}
\begin{figure}[htbp]
    \centering
    \subfigure{
\includegraphics[width=0.96\textwidth]{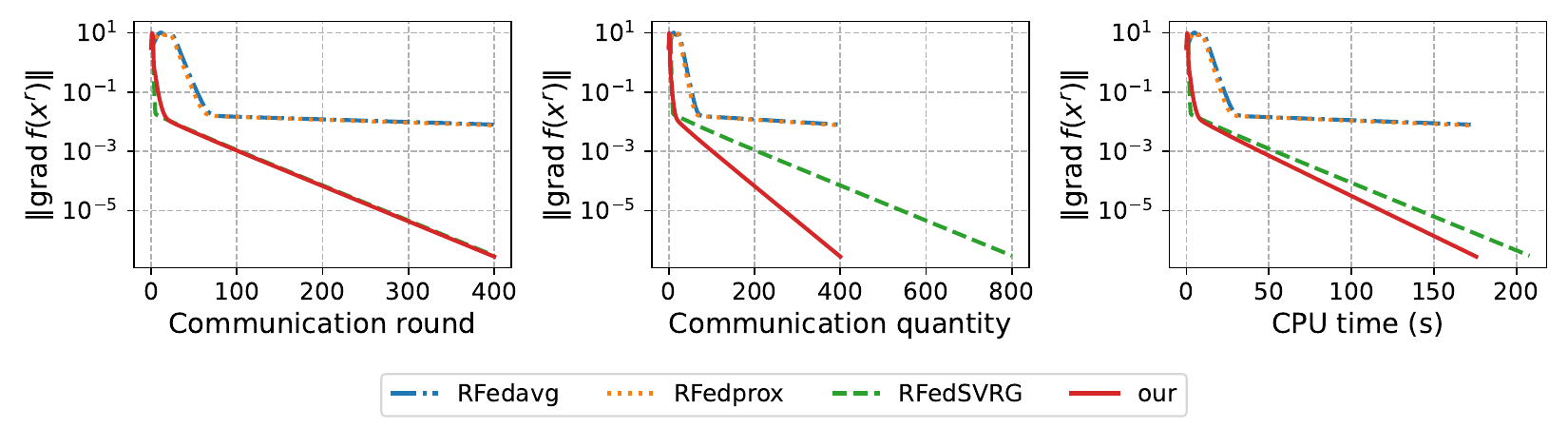}
        \label{fig:mc-chart1}
    }
     \vspace{-2mm}
    \caption{LRMC: Comparison on $\|\grad f(x^r)\|$.}
    \label{fig-mc-compare}
\end{figure}
\vspace{-2mm}
\section{Conclusions and limitations}
\vspace{-2mm}
This paper addresses the challenges of FL on compact smooth submanifolds. We introduce a novel algorithm that enables full client participation, local updates, and heterogeneous data distributions. By leveraging stochastic Riemannian gradients and a manifold projection operator, our method enhances computational and communication efficiency while mitigating client drift. By exploiting the manifold structure and properties of the loss function, we prove sub-linear convergence to a neighborhood of a first-order stationary point. Numerical experiments show a superior performance of our algorithm in terms of computational and communication costs compared to the state-of-the-art.

{\bf Limitations.} {Our paper motivates several questions for further investigation.} First, the absence of closed-form solutions for the projection operator $\Pcal_{\Mcal}$ for certain manifolds necessitates exploring methods to calculate projections approximately. Additionally, our step-size selection relies on the proximal smoothness constant $\gamma$, underscoring the need for estimating $\gamma$ either off-line for specific manifolds or adaptively on-line. Furthermore, designing algorithms for partial participation and devising corresponding client-drift correction mechanisms require further investigation.

\clearpage
%\section*{References}
\bibliographystyle{unsrt}    
\bibliography{autosam}  

\begin{thebibliography}{10}

\bibitem{li2020federated}
Tian Li, Anit~Kumar Sahu, Manzil Zaheer, Maziar Sanjabi, Ameet Talwalkar, and
  Virginia Smith.
\newblock Federated optimization in heterogeneous networks.
\newblock {\em Proceedings of Machine Learning and Systems}, 2:429--450, 2020.

\bibitem{kairouz2021advances}
Peter Kairouz, H~Brendan McMahan, Brendan Avent, Aur{\'e}lien Bellet, Mehdi
  Bennis, Arjun~Nitin Bhagoji, Kallista Bonawitz, Zachary Charles, Graham
  Cormode, Rachel Cummings, et~al.
\newblock Advances and open problems in federated learning.
\newblock {\em Foundations and Trends{\textregistered} in Machine Learning},
  14(1--2):1--210, 2021.

\bibitem{chen2020proximal}
Shixiang Chen, Shiqian Ma, Anthony Man-Cho~So, and Tong Zhang.
\newblock Proximal gradient method for nonsmooth optimization over the stiefel
  manifold.
\newblock {\em SIAM Journal on Optimization}, 30(1):210--239, 2020.

\bibitem{wang2022bdecentralized}
Lei Wang and Xin Liu.
\newblock Decentralized optimization over the {S}tiefel manifold by an
  approximate augmented lagrangian function.
\newblock {\em IEEE Transactions on Signal Processing}, 70:3029--3041, 2022.

\bibitem{ye2021deepca}
Haishan Ye and Tong Zhang.
\newblock {DeEPCA}: Decentralized exact {PCA} with linear convergence rate.
\newblock {\em Journal of Machine Learning Research}, 22(238):1--27, 2021.

\bibitem{chen2021decentralized}
Shixiang Chen, Alfredo Garcia, Mingyi Hong, and Shahin Shahrampour.
\newblock Decentralized {R}riemannian gradient descent on the {S}tiefel
  manifold.
\newblock In {\em International Conference on Machine Learning}, pages
  1594--1605. PMLR, 2021.

\bibitem{boumal2015low}
Nicolas Boumal and P-A Absil.
\newblock Low-rank matrix completion via preconditioned optimization on the
  {G}rassmann manifold.
\newblock {\em Linear Algebra and its Applications}, 475:200--239, 2015.

\bibitem{kasai2019riemannian}
Hiroyuki Kasai, Pratik Jawanpuria, and Bamdev Mishra.
\newblock Riemannian adaptive stochastic gradient algorithms on matrix
  manifolds.
\newblock In {\em International Conference on Machine Learning}, pages
  3262--3271. PMLR, 2019.

\bibitem{tripuraneni2021provable}
Nilesh Tripuraneni, Chi Jin, and Michael Jordan.
\newblock Provable meta-learning of linear representations.
\newblock In {\em International Conference on Machine Learning}, pages
  10434--10443. PMLR, 2021.

\bibitem{dimitriadis2023pareto}
Nikolaos Dimitriadis, Pascal Frossard, and Fran{\c{c}}ois Fleuret.
\newblock Pareto manifold learning: Tackling multiple tasks via ensembles of
  single-task models.
\newblock In {\em International Conference on Machine Learning}, pages
  8015--8052. PMLR, 2023.

\bibitem{magai2023deep}
German Magai.
\newblock Deep neural networks architectures from the perspective of manifold
  learning.
\newblock In {\em 2023 IEEE 6th International Conference on Pattern Recognition
  and Artificial Intelligence (PRAI)}, pages 1021--1031. IEEE, 2023.

\bibitem{yerxa2023learning}
Thomas Yerxa, Yilun Kuang, Eero Simoncelli, and SueYeon Chung.
\newblock Learning efficient coding of natural images with maximum manifold
  capacity representations.
\newblock {\em Advances in Neural Information Processing Systems},
  36:24103--24128, 2023.

\bibitem{li2022federated}
Jiaxiang Li and Shiqian Ma.
\newblock Federated learning on {R}iemannian manifolds.
\newblock {\em arXiv preprint arXiv:2206.05668}, 2022.

\bibitem{boumal2023introduction}
Nicolas Boumal.
\newblock {\em An introduction to optimization on smooth manifolds}.
\newblock Cambridge University Press, 2023.

\bibitem{hu2020brief}
Jiang Hu, Xin Liu, Zaiwen Wen, and Yaxiang Yuan.
\newblock A brief introduction to manifold optimization.
\newblock {\em Journal of the Operations Research Society of China},
  8:199--248, 2020.

\bibitem{zimmermann2022computing}
Ralf Zimmermann and Knut Huper.
\newblock Computing the {R}iemannian logarithm on the {S}tiefel manifold:
  Metrics, methods, and performance.
\newblock {\em SIAM Journal on Matrix Analysis and Applications},
  43(2):953--980, 2022.

\bibitem{li2019convergence}
Xiang Li, Kaixuan Huang, Wenhao Yang, Shusen Wang, and Zhihua Zhang.
\newblock On the convergence of {FedAvg} on non-iid data.
\newblock In {\em International Conference on Learning Representations}, 2019.

\bibitem{karimireddy2020scaffold}
Sai~Praneeth Karimireddy, Satyen Kale, Mehryar Mohri, Sashank Reddi, Sebastian
  Stich, and Ananda~Theertha Suresh.
\newblock Scaffold: Stochastic controlled averaging for federated learning.
\newblock In {\em International Conference on Machine Learning}, pages
  5132--5143, 2020.

\bibitem{yuan2021federated}
Honglin Yuan, Manzil Zaheer, and Sashank Reddi.
\newblock Federated composite optimization.
\newblock In {\em International Conference on Machine Learning}, pages
  12253--12266, 2021.

\bibitem{bao2022fast}
Yajie Bao, Michael Crawshaw, Shan Luo, and Mingrui Liu.
\newblock Fast composite optimization and statistical recovery in federated
  learning.
\newblock In {\em International Conference on Machine Learning}, pages
  1508--1536, 2022.

\bibitem{tran2021feddr}
Quoc Tran~Dinh, Nhan~H Pham, Dzung Phan, and Lam Nguyen.
\newblock {FedDR}--randomized {Douglas-Rachford} splitting algorithms for
  nonconvex federated composite optimization.
\newblock {\em Advances in Neural Information Processing Systems},
  34:30326--30338, 2021.

\bibitem{wang2022fedadmm}
Han Wang, Siddartha Marella, and James Anderson.
\newblock {FedADMM}: A federated primal-dual algorithm allowing partial
  participation.
\newblock In {\em 2022 IEEE 61st Conference on Decision and Control (CDC)},
  pages 287--294, 2022.

\bibitem{zhang2024composite}
Jiaojiao Zhang, Jiang Hu, and Mikael Johansson.
\newblock Composite federated learning with heterogeneous data.
\newblock In {\em ICASSP 2024-2024 IEEE International Conference on Acoustics,
  Speech and Signal Processing (ICASSP)}, pages 8946--8950. IEEE, 2024.

\bibitem{mcmahan2017communication}
Brendan McMahan, Eider Moore, Daniel Ramage, Seth Hampson, and Blaise~Aguera
  y~Arcas.
\newblock Communication-efficient learning of deep networks from decentralized
  data.
\newblock In {\em Artificial Intelligence and Statistics}, pages 1273--1282,
  2017.

\bibitem{karimireddy2020mime}
Sai~Praneeth Karimireddy, Martin Jaggi, Satyen Kale, Mehryar Mohri, Sashank~J
  Reddi, Sebastian~U Stich, and Ananda~Theertha Suresh.
\newblock Mime: Mimicking centralized stochastic algorithms in federated
  learning.
\newblock {\em arXiv preprint arXiv:2008.03606}, 2020.

\bibitem{mitra2021linear}
Aritra Mitra, Rayana Jaafar, George~J Pappas, and Hamed Hassani.
\newblock Linear convergence in federated learning: Tackling client
  heterogeneity and sparse gradients.
\newblock {\em Advances in Neural Information Processing Systems},
  34:14606--14619, 2021.

\bibitem{deng2023decentralized}
Kangkang Deng and Jiang Hu.
\newblock Decentralized projected {R}iemannian gradient method for smooth
  optimization on compact submanifolds.
\newblock {\em arXiv preprint arXiv:2304.08241}, 2023.

\bibitem{chen2024decentralized}
Jun Chen, Haishan Ye, Mengmeng Wang, Tianxin Huang, Guang Dai, Ivor Tsang, and
  Yong Liu.
\newblock Decentralized {R}iemannian conjugate gradient method on the {S}tiefel
  manifold.
\newblock In {\em The Twelfth International Conference on Learning
  Representations}, 2024.

\bibitem{huang2024federated}
Zhenwei Huang, Wen Huang, Pratik Jawanpuria, and Bamdev Mishra.
\newblock Federated learning on {R}iemannian manifolds with differential
  privacy.
\newblock {\em arXiv preprint arXiv:2404.10029}, 2024.

\bibitem{nguyen2023federated}
Tung-Anh Nguyen, Jiayu He, Long~Tan Le, Wei Bao, and Nguyen~H Tran.
\newblock Federated {PCA} on {G}rassmann manifold for anomaly detection in iot
  networks.
\newblock In {\em IEEE INFOCOM 2023-IEEE Conference on Computer
  Communications}, pages 1--10. IEEE, 2023.

\bibitem{grammenos2020federated}
Andreas Grammenos, Rodrigo Mendoza~Smith, Jon Crowcroft, and Cecilia Mascolo.
\newblock Federated principal component analysis.
\newblock {\em Advances in Neural Information Processing Systems},
  33:6453--6464, 2020.

\bibitem{absil2012projection}
P-A Absil and J{\'e}r{\^o}me Malick.
\newblock Projection-like retractions on matrix manifolds.
\newblock {\em SIAM Journal on Optimization}, 22(1):135--158, 2012.

\bibitem{clarke1995proximal}
Francis~H Clarke, Ronald~J Stern, and Peter~R Wolenski.
\newblock Proximal smoothness and the lower-{C2} property.
\newblock {\em Journal of Convex Analysis}, 2(1-2):117--144, 1995.

\bibitem{davis2020stochastic}
Damek Davis, Dmitriy Drusvyatskiy, and Zhan Shi.
\newblock Stochastic optimization over proximally smooth sets.
\newblock {\em arXiv preprint arXiv:2002.06309}, 2020.

\bibitem{balashov2019nonconvex}
MV~Balashov.
\newblock Nonconvex optimization.
\newblock {\em Control theory (additional chapters): tutorial. Moscow: Lenand},
  2019.

\bibitem{balashov2022error}
MV~Balashov and AA~Tremba.
\newblock Error bound conditions and convergence of optimization methods on
  smooth and proximally smooth manifolds.
\newblock {\em Optimization}, 71(3):711--735, 2022.

\bibitem{zhou2019faster}
Pan Zhou, Xiao-Tong Yuan, and Jiashi Feng.
\newblock Faster first-order methods for stochastic non-convex optimization on
  {R}iemannian manifolds.
\newblock In {\em The 22nd International Conference on Artificial Intelligence
  and Statistics}, pages 138--147. PMLR, 2019.

\bibitem{j2016proximal}
Sashank J~Reddi, Suvrit Sra, Barnabas Poczos, and Alexander~J Smola.
\newblock Proximal stochastic methods for nonsmooth nonconvex finite-sum
  optimization.
\newblock {\em Advances in Neural Information Processing Systems}, 29, 2016.

\bibitem{zaheer2018adaptive}
Manzil Zaheer, Sashank Reddi, Devendra Sachan, Satyen Kale, and Sanjiv Kumar.
\newblock Adaptive methods for nonconvex optimization.
\newblock {\em Advances in neural information processing systems}, 31, 2018.

\bibitem{edelman1998geometry}
Alan Edelman, Tom{\'a}s~A Arias, and Steven~T Smith.
\newblock The geometry of algorithms with orthogonality constraints.
\newblock {\em SIAM journal on Matrix Analysis and Applications},
  20(2):303--353, 1998.

\bibitem{boumal2014manopt}
Nicolas Boumal, Bamdev Mishra, P-A Absil, and Rodolphe Sepulchre.
\newblock Manopt, a {M}atlab toolbox for optimization on manifolds.
\newblock {\em The Journal of Machine Learning Research}, 15(1):1455--1459,
  2014.

\bibitem{townsend2016pymanopt}
James Townsend, Niklas Koep, and Sebastian Weichwald.
\newblock Pymanopt: A python toolbox for optimization on manifolds using
  automatic differentiation.
\newblock {\em Journal of Machine Learning Research}, 17(137):1--5, 2016.

\bibitem{foote1984regularity}
Robert~L Foote.
\newblock Regularity of the distance function.
\newblock {\em Proceedings of the American Mathematical Society},
  92(1):153--155, 1984.

\bibitem{noble2022differentially}
Maxence Noble, Aur{\'e}lien Bellet, and Aymeric Dieuleveut.
\newblock Differentially private federated learning on heterogeneous data.
\newblock In {\em International Conference on Artificial Intelligence and
  Statistics}, pages 10110--10145, 2022.

\end{thebibliography}
%%%%%%%%%%%%%%%%%%%%%%%%%%%%%%%%%%%%%%%%%%%%%%%%%%%%%%%%%%%%

\clearpage
\appendix
\section{Appendix}\label{section-app}
\subsection{Notations}\label{app-notation}
We use $I_k$ to denote a $k\times k$ identity matrix. 
We use $\|\cdot\|$ to denote Frobenius norm and $\text{tr}(\cdot)$ to denote the trace of a matrix.  For a set   $\mB$, we use $|\mB|$ to denote the cardinality.   For a random variable $v$,  we use $\mathbb{E}[v]$ to denote the expectation and $\mathbb{E}[v|\mathcal{F}]$ to denote the expectation given event $\mathcal{F}$. For an integer $n$, we use $[n]$ to denote the set $\{1,\ldots, n\}$. For two matrices $x, y \in \R^{d\times k}$, we define their Euclidean inner product as $\iprod{x}{y}:=\sum_{i=1}^d\sum_{j=1}^k x_{ij}y_{ij}$.
For matrices $z_1,\ldots,z_n\in\mathbb{R}^{d\times k}$, we use $\bz:=\col\{z_i\}_{i=1}^{n}:=[z_1;\ldots;z_n]\in\mathbb{R}^{nd\times k}$ to denote the vertical stack of all matrices. The bold notations $\hbz$, ${\bf c}$, and $\bf {\bf\Lambda }$ are defined similarly. Specifically, for a matrix $\ox\in \mathbb{R}^{d\times k}$, we define $\bx:=\col\{\ox\}_{i=1}^{n}:=[\ox;\ldots;\ox] \in \mathbb{R}^{nd\times k}$.     
We use $r$ to denote the index of the communication round and $t$ to denote the index of local updates.
Given the local Riemannian gradient $ \grad f_{i}(z_{i,t}^r;\mB_{i,t}^r)$ at point $z_{i,t}^r$ with the mini-batch dataset \( \mB_{i,t}^r \), we define the stack of Riemannian gradients as $\grad \mathbf{f}(\bz_t^r;\mB_{t}^r):=\col\{\grad f_{i}(z_{i,t}^r;\mB_{i,t}^r)\}_{i=1}^n$  and the stack of average local Riemannian gradients as $ \onbf(\bz_t^r;\mB_{t}^r):=\col\left\{ \frac{1}{n} {\sum_{i=1}^{n}\grad f_{i}(z_{i,t}^r;\mB_{i,t}^r)}\right\}_{i=1}^n $. 
Given $\col\{ z_i\}_{i=1}^n$ and $\Pcal_{\Mcal}(z_i)$,  we define $\Pcal_{\Mcal}(\col\{z_i\}_{i=1}^n)=\col\{ \Pcal_{\Mcal}(z_i)\}_{i=1}^n$.

We analyze the proposed algorithm using the Lyapunov function $\Omega^r$ defined by
\begin{equation}\label{eq:omega_ncvx}
\begin{aligned}
\Omega^r:= f(\Pcal_{\Mcal}(\ox^{r})){-f^{\star}}+ \frac{1}{n\teta }\|{\bf  \Lambda }^{r}-\overline{{\bf  \Lambda }}^{r}\|^2,
\end{aligned}
\end{equation}
where $f^{\star}$ is the optimal value of problem \eqref{eqn:basic_opt} and we define
$$ {\bf  \Lambda }^r:={\eta}( \tau\grad \mathbf{f}({\Pcal_{\Mcal}(\bx^r)})+ \sum_{t=0}^{\tau-1} \overline{\nbf}(\bz_t^{{r-1}} ; \mB_{t}^{r-1})-  \sum_{t=0}^{\tau-1} \nbf (\bz_t^{{r-1}} ; \mB_{t}^{r-1}))$$ and $ \overline{{\bf  \Lambda }}^r:=\col\left\{ \tfrac{1}{n} \sum_{i=1}^{n}{  \Lambda }_i^r \right\}_{i=1}^n$. 

The Lyapunov function consists of two parts: to bound the suboptimality of the global model $\Pcal_{\Mcal}(\ox^r)$ and the reduction of ``variance'' among clients, respectively. 

% In the following, we show the reasonableness of the metric $\|\mathcal{G}(\Pcal_{\Mcal}(\ox^r))\|$. Substituting $\mathcal{G}(\Pcal_{\Mcal}(\ox^r))=0$, we have 
%  \[ \Pcal_{\Mcal}(\ox^r) = \Pcal_{\Mcal} \left( \Pcal_{\Mcal}(\ox^r)-\teta \grad f(\Pcal_{\Mcal}(\ox^r))\right) := \arg\min_{y \in \Mcal} \;\; \| y - \Pcal_{\Mcal}(\ox^r)+\teta \grad f(\Pcal_{\Mcal}(\ox^r))\|^2. \] 
%  It follow from the optimality of $\Pcal_{\Mcal}(\ox^r)$ that $0=P_{T_{\Pcal_{\Mcal}(\ox^r)} \Mcal}(\teta \grad f(\Pcal_{\Mcal}(\ox^r))) $, which implies that $\grad f(\Pcal_{\Mcal}(\ox^r)) = 0$.

\subsection{Preliminary lemmas}
Let us start with the following lemma on the global-like Lipschitz-continuity property of $\Pcal_{\Mcal}$. 
\begin{lemma}\label{lem-glob-lips}
There exists a constant $M > 0$ such that for any $x \in \Mcal$,
\be \label{eq:proj-lip} \|\Pcal_{\Mcal}(x + u) - x\| \leq M \|u\|. \ee
\end{lemma}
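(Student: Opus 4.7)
The plan is to split on the size of $\|u\|$ relative to the proximal smoothness radius $\gamma$, using the local Lipschitz estimate \eqref{eq:lip-proj-alpha} when $u$ is small and the compactness of $\Mcal$ when $u$ is large. This will yield the constant $M = \max\{{\rm diam}(\Mcal)/\gamma,\, 2\}$ appearing in Theorem~\ref{thm:noncvx}.

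First, observe that since $x \in \Mcal$, we have $\Pcal_{\Mcal}(x) = x$, so the claim is equivalent to $\|\Pcal_{\Mcal}(x+u) - \Pcal_{\Mcal}(x)\| \le M\|u\|$. In the \emph{small perturbation regime}, when $\|u\| \le \gamma$, the point $x+u$ lies in the closed tube $\overline{U}_{\Mcal}(\gamma)$ (because ${\rm dist}(x+u,\Mcal)\le \|u\|\le\gamma$), and so does $x$ itself. Applying the Lipschitz bound \eqref{eq:lip-proj-alpha} guaranteed by Assumption~\ref{asm-prox-smooth} (since $\Mcal$ is $2\gamma$-proximally smooth) gives
\begin{equation*}
\|\Pcal_{\Mcal}(x+u) - x\| = \|\Pcal_{\Mcal}(x+u) - \Pcal_{\Mcal}(x)\| \le 2\|u\|.
\end{equation*}

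In the \emph{large perturbation regime}, when $\|u\| > \gamma$, I would instead use that $\Pcal_{\Mcal}(x+u) \in \Mcal$ and $x \in \Mcal$, together with compactness of $\Mcal$, to deduce
\begin{equation*}
\|\Pcal_{\Mcal}(x+u) - x\| \le {\rm diam}(\Mcal) = \frac{{\rm diam}(\Mcal)}{\gamma}\cdot \gamma < \frac{{\rm diam}(\Mcal)}{\gamma}\|u\|.
\end{equation*}
Combining the two regimes, the bound $\|\Pcal_{\Mcal}(x+u) - x\| \le M\|u\|$ holds with $M = \max\{{\rm diam}(\Mcal)/\gamma,\, 2\}$, which is finite since $\Mcal$ is compact and $\gamma > 0$.

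The only subtle point is making sure that in the small-perturbation case both arguments $x$ and $x+u$ lie in $\overline{U}_{\Mcal}(\gamma)$ so that \eqref{eq:lip-proj-alpha} genuinely applies; this is immediate since $x \in \Mcal \subset \overline{U}_{\Mcal}(\gamma)$ and ${\rm dist}(x+u,\Mcal) \le \|u\| \le \gamma$. No sharper estimate is needed because in the large-perturbation case compactness provides the trivial diameter bound for free, and the resulting $M$ matches exactly the constant used in the step-size condition \eqref{eq:step-sizes}.
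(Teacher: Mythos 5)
Your proof is correct and follows exactly the same two-case argument as the paper: the $2$-Lipschitz bound \eqref{eq:lip-proj-alpha} on the tube when $\|u\|\le\gamma$, the diameter bound when $\|u\|>\gamma$, and the same constant $M=\max\{{\rm diam}(\Mcal)/\gamma,\,2\}$. The extra care you take to check that both $x$ and $x+u$ lie in $\overline{U}_{\Mcal}(\gamma)$ is a nice touch that the paper leaves implicit.
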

\begin{proof}
Let us consider two cases:
\begin{itemize}
\item $\|u\| \geq \gamma$: Since $\Pcal_{\Mcal}(x +u)$ and $x$ belong to $\Mcal$, we have
\[ \|\Pcal_{\Mcal}(x + u) - x\| \leq {\rm diam}(\Mcal) \leq \frac{{\rm diam}(\Mcal)}{\gamma} \|u\|, \]
where ${\rm diam}(\Mcal):={\rm max}_{x,y \in \Mcal} \|x-y\|$ is the diameter of $\Mcal$.
\item $\|u\| \leq \gamma$: By the 2-Lipschitz continuity of $\Pcal_{\Mcal}$ over $\overline{U}_{\Mcal}(\gamma)$ in \eqref{eq:lip-proj-alpha}, we have 
\[ \|\Pcal_{\Mcal}(x + u) - x\| \leq 2 \|u\|. \]
\end{itemize}
Setting $M:=\max \left\{ \frac{{\rm diam}(\Mcal)}{\gamma}, 2 \right\}$, we complete the proof.
\end{proof}

In the following, we show the reasonableness of the suboptimality metric $\|\mathcal{G}_{\teta}(\cdot)\|$. 
\begin{lemma} \label{lem:two-grad-equ}
    Consider $\mathcal{G}_{\teta}(\cdot)$ defined by \eqref{eq:station-measure}. Then, for any $x \in \Mcal$, it holds that
     \[ \grad f(x) = 0 {\rm \quad if~and~only~if\quad }  \mathcal{G}_{\teta}(x) = 0.  \]
     In addition, under Assumptions \ref{asm-prox-smooth} and \ref{asm-smooth}, if $\teta \le \min\left\{ \frac{\gamma}{D_f}, \frac{1}{D_f L_\Pcal} \right\}$ with $L_{\Pcal}$ being the smoothness constant of $D^2\Pcal_{\Mcal}(\cdot)$ over $\overline{U}_{\Mcal}(\gamma)$, it holds that
     \be \label{eq:two-grad-equ} \| \grad f(x) \| \leq 2 \|\mathcal{G}_{\teta}(x)\|. \ee
\end{lemma}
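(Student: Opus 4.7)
The plan is to prove both parts by leveraging the orthogonal decomposition $\R^{d\times k} = T_x\Mcal \oplus N_x\Mcal$ at $x \in \Mcal$ together with the crucial fact that $\grad f(x) \in T_x\Mcal$. The equivalence in part one follows because the first-order optimality of the projection forces the residual to lie in $N_x\Mcal$, and the quantitative bound in part two comes from a second-order Taylor expansion of $\Pcal_{\Mcal}$ along the tangential direction $u := -\teta\,\grad f(x)$.

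\emph{Equivalence.} The forward direction is immediate: $\grad f(x) = 0$ gives $\tilde x = \Pcal_{\Mcal}(x) = x$ since $x \in \Mcal$, and hence $\mathcal{G}_{\teta}(x) = 0$. For the converse, $\mathcal{G}_{\teta}(x) = 0$ implies that $x$ is a minimizer of $y \mapsto \|y - (x - \teta\,\grad f(x))\|^2$ over $\Mcal$. The first-order Riemannian optimality condition at this minimizer reads $\Pcal_{T_x\Mcal}(\teta\,\grad f(x)) = 0$, i.e., $\teta\,\grad f(x) \in N_x\Mcal$. Combined with $\grad f(x) \in T_x\Mcal$ and $T_x\Mcal \cap N_x\Mcal = \{0\}$, this forces $\grad f(x) = 0$.

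\emph{Quantitative bound.} Set $u := -\teta\,\grad f(x) \in T_x\Mcal$. Since $\|\grad f(x)\| \le \|\nabla f(x)\| \le D_f$ and $\teta \le \gamma/D_f$, we have $\|u\| \le \gamma$, so the whole segment $\{x + s u : s \in [0,1]\}$ stays in $\overline{U}_\Mcal(\gamma)$, where $\Pcal_\Mcal$ is $C^2$ with $\|D^2 \Pcal_\Mcal\| \le L_\Pcal$. Using $\Pcal_\Mcal(x) = x$ together with the standard identity $D\Pcal_\Mcal(x) = \Pcal_{T_x\Mcal}$ for projections onto smooth embedded submanifolds, Taylor's theorem with integral remainder yields
\[
\Pcal_\Mcal(x + u) = x + \Pcal_{T_x\Mcal}(u) + R(u) = x + u + R(u), \qquad \|R(u)\| \le \tfrac{1}{2} L_\Pcal \|u\|^2.
\]
Substituting into $\mathcal{G}_\teta(x) = (x - \Pcal_\Mcal(x + u))/\teta$ gives $\mathcal{G}_\teta(x) = \grad f(x) - R(u)/\teta$. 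The reverse triangle inequality together with the bound $\|R(u)\|/\teta \le \tfrac{1}{2} L_\Pcal \teta \|\grad f(x)\|^2 \le \tfrac{1}{2} \|\grad f(x)\|$, where the last step invokes $\teta \le 1/(D_f L_\Pcal)$ and $\|\grad f(x)\| \le D_f$, yields $\|\mathcal{G}_\teta(x)\| \ge \tfrac{1}{2}\|\grad f(x)\|$ and hence the claim.

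The main obstacle is producing the second-order Taylor remainder of the correct form on the correct tube. This relies on two ingredients: the classical identity $D\Pcal_\Mcal(x) = \Pcal_{T_x\Mcal}$ for $x \in \Mcal$, which is precisely what makes the first-order term equal to $u$ because $u$ is tangential; and the uniform bound $\|D^2 \Pcal_\Mcal\| \le L_\Pcal$ on the $\gamma$-tube encoded in the statement of Theorem \ref{thm:noncvx}. The step-size condition $\teta \le \gamma/D_f$ keeps the Taylor expansion valid, while $\teta \le 1/(D_f L_\Pcal)$ is precisely tuned so that the second-order remainder is absorbed into half of the first-order term $\grad f(x)$.
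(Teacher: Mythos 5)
Your proposal is correct and follows essentially the same route as the paper's proof: the same first-order optimality argument via $\Pcal_{T_x\Mcal}(\teta\,\grad f(x))=0$ for the equivalence, and the same second-order Taylor expansion of $\Pcal_{\Mcal}$ (using Foote's smoothness on the tube, the bound $L_{\Pcal}$ on $D^2\Pcal_{\Mcal}$, and the identity $D\Pcal_{\Mcal}(x)=\Pcal_{T_x\Mcal}$) for the quantitative bound. You merely make explicit the Taylor remainder that the paper compresses into a single chained inequality.
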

\begin{proof}
    If $\grad f(x) = 0$, it follows directly from the definition of $\mathcal{G}_{\teta}(\cdot)$ that $\mathcal{G}_{\teta}(x) = 0$. Conversely, if $\mathcal{G}_{\teta}(x)=0$, we have 
 \[ x = \Pcal_{\Mcal} \left( x -\teta \grad f(x) \right) := \underset{y \in \Mcal}{\operatorname{argmin}} \; \| y - x +\teta \grad f(x)\|^2. \] 
 It follow from the optimality of $x$ that $0=P_{T_{x} \Mcal}(\teta \grad f(x)) $, which implies that $\grad f(x) = 0$.

 With \cite[Lemma]{foote1984regularity}, $\Pcal_{\Mcal}(\cdot)$ is sufficiently smooth over $\overline{U}_{\Mcal}(\gamma)$. Let us define $L_{\Pcal} := \max_{x \in \overline{U}_{\Mcal}(\gamma)} \|D^2 \Pcal_{\Mcal}(x)\|$, then we have
 \[ \begin{aligned}
     \| \mathcal{G}_{\teta}(x) \| &= \frac{1}{\teta}\| x - \Pcal_{\Mcal}(x - \teta \grad f(x))  \| \\
     & \geq \| \grad f(x)\| - \frac{1}{2}  L_\Pcal \teta \|\grad f(x)\|^2 \\
     & \geq \frac{1}{2} \|\grad f(x)\|,
 \end{aligned} \]
 where we use $\teta\le \frac{\gamma}{D_f}$ in the first inequality and $\teta \le \frac{1}{ L_{\Pcal} D_f} $ in the second inequality. 
 This gives \eqref{eq:two-grad-equ}.
\end{proof}

To prove Theorem~\ref{thm:noncvx}, we use the following lemma to establish a recursion on the second term on ${\bf \Lambda}^r$ in the Lyapunov function. 

\begin{lemma}\label{lem:phi-bar-x}
Under Assumptions \ref{asm-prox-smooth}, \ref{asm-smooth}, and \ref{asm-sgd}, if $\teta \le \min \left\{ \frac{\eta_g}{16 L}, \frac{ \gamma {\eta_g}}{ 2 D_f} \right\}$, we have
\begin{align}\label{eq-Lambda}
&\frac{1}{n}\bE\|{\bf  \Lambda }^{r+1}-\overline{{\bf  \Lambda }}^{r+1}\|^2-2\eta^2\tau^2 L^2 \bE\left\|\Pcal_{\Mcal}(\ox^{r+1})-\Pcal_{\Mcal}(\ox^r)\right\|^2\\\notag
\le&\frac{1}{n} {4 \eta^2 \tau L^2} \Big(3 n M^2 \tau^3 \eta^2 \|\grad f( \Pcal_{\Mcal}(\ox^r))  \|^2 + 9 \tau  \mathbb{E}\| {\bf  \Lambda }^r -  \overline{{\bf  \Lambda }}^r \|^2+ 18n \tau^2 \eta^2 \frac{\sigma^2}{b} \Big)+ \frac{1}{n} {4} \eta^2  n^2\tau^2  \frac{\sigma^2}{n\tau b}.  
\end{align}
\end{lemma}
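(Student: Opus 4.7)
The plan is to control the left-hand side of~\eqref{eq-Lambda} by exploiting the client-centering structure of ${\bf\Lambda}^{r+1}$: the terms involving averages $\overline{\nabla\mathbf{f}}$ are constant across clients and cancel when we subtract $\overline{{\bf\Lambda}}^{r+1}$. Using the server update (Line~13) together with the correction update (Line~17) of Algorithm~\ref{alg-fl}, one can verify the identity $\Lambda_i^r = \eta\tau(\grad f_i(\Pcal_{\Mcal}(\ox^r)) + c_i^r)$, with $\tfrac{1}{n}\sum_i c_i^r = 0$ by induction from $c_i^1=0$, and hence $\bar\Lambda^r = \eta\tau\,\grad f(\Pcal_{\Mcal}(\ox^r))$. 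It then follows that
\begin{equation*}
\Lambda_i^{r+1}-\bar\Lambda^{r+1} = \eta\sum_{t=0}^{\tau-1}\left(a_{i,t} - \bar a_t\right),\quad a_{i,t}:=\grad f_i(\Pcal_{\Mcal}(\ox^{r+1})) - \grad f_i(z_{i,t}^r;\mB_{i,t}^r),
\end{equation*}
with $\bar a_t$ the client-average of $a_{\cdot,t}$.

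Next, I would split $a_{i,t}$ into $a_{i,t}^{(1)}:=\grad f_i(\Pcal_{\Mcal}(\ox^{r+1}))-\grad f_i(\Pcal_{\Mcal}(\ox^r))$ and $a_{i,t}^{(2)}:=\grad f_i(\Pcal_{\Mcal}(\ox^r))-\grad f_i(z_{i,t}^r;\mB_{i,t}^r)$. Applying the centering inequality $\sum_i\|v_i-\bar v\|^2\le\sum_i\|v_i\|^2$, Cauchy--Schwarz in $t$, and $\|x+y\|^2\le 2\|x\|^2+2\|y\|^2$, together with $L$-Lipschitzness of $\grad f_i$ on $\Mcal$ (using that all iterates stay in $\overline{U}_{\Mcal}(\gamma)$, see below) and Assumption~\ref{asm-sgd} (so the noise is conditionally zero-mean with variance $\le\sigma^2/b$, which lets the cross term vanish in expectation and avoids an extra factor of $\tau$), leads to
\begin{equation*}
\tfrac{1}{n}\bE\|{\bf\Lambda}^{r+1}-\overline{{\bf\Lambda}}^{r+1}\|^2 \le 2\eta^2\tau^2 L^2\,\bE\|\Pcal_{\Mcal}(\ox^{r+1})-\Pcal_{\Mcal}(\ox^r)\|^2 + \tfrac{4\eta^2\tau L^2}{n}\sum_{i,t}\bE\|z_{i,t}^r-\Pcal_{\Mcal}(\ox^r)\|^2 + 4\eta^2\tau\,\tfrac{\sigma^2}{b}.
\end{equation*}
The first term is absorbed into the $-2\eta^2\tau^2 L^2$ term on the LHS, and the remaining task is to bound the local-drift sum $\sum_{i,t}\bE\|z_{i,t}^r-\Pcal_{\Mcal}(\ox^r)\|^2$.

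For the drift bound, since $z_{i,t}^r=\Pcal_{\Mcal}(\hat z_{i,t}^r)$ and $\hat z_{i,0}^r=\Pcal_{\Mcal}(\ox^r)\in\Mcal$, Lemma~\ref{lem-glob-lips} gives $\|z_{i,t}^r-\Pcal_{\Mcal}(\ox^r)\|\le M\|\hat z_{i,t}^r-\hat z_{i,0}^r\|$, once we verify $\hat z_{i,t}^r\in\overline{U}_{\Mcal}(\gamma)$ via $\teta\le\gamma\eta_g/(2D_f)$ together with the uniform bounds $\|\grad f_i\|\le D_f$ and $\|c_i^r\|\le 2D_f$. Unrolling $\hat z_{i,t}^r-\hat z_{i,0}^r = -\eta\sum_{s<t}(\grad f_i(z_{i,s}^r;\mB_{i,s}^r)+c_i^r)$ and using the identity above to rewrite $\grad f_i(z_{i,s}^r;\mB_{i,s}^r)+c_i^r$ as $[\grad f_i(z_{i,s}^r;\mB_{i,s}^r)-\grad f_i(\Pcal_{\Mcal}(\ox^r))] + (\Lambda_i^r-\bar\Lambda^r)/(\eta\tau) + \grad f(\Pcal_{\Mcal}(\ox^r))$, then applying $L$-Lipschitzness to the first piece and again exploiting zero-mean noise, produces a discrete Gr\"onwall-type inequality in $t$ for $\bE\|z_{i,t}^r-\Pcal_{\Mcal}(\ox^r)\|^2$. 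Solving it under $\teta\le 1/(24ML)$ yields
\[\sum_{i,t}\bE\|z_{i,t}^r-\Pcal_{\Mcal}(\ox^r)\|^2 \le 3nM^2\tau^3\eta^2\,\bE\|\grad f(\Pcal_{\Mcal}(\ox^r))\|^2 + 9\tau\,\bE\|{\bf\Lambda}^r-\overline{{\bf\Lambda}}^r\|^2 + 18n\tau^2\eta^2\,\tfrac{\sigma^2}{b}.\]

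The main obstacle is this self-referential drift recursion: the estimate for $\bE\|z_{i,t}^r-\Pcal_{\Mcal}(\ox^r)\|^2$ contains $\bE\|z_{i,s}^r-\Pcal_{\Mcal}(\ox^r)\|^2$ for $s<t$, and a Gr\"onwall argument only closes when $\eta\tau ML$ is sufficiently small, which is precisely what the step-size conditions in~\eqref{eq:step-sizes} ensure. A secondary technical point is keeping every $\hat z_{i,t}^r$ inside the proximally smooth tube $\overline{U}_{\Mcal}(\gamma)$ so that Lemma~\ref{lem-glob-lips} and \eqref{eq:lip-proj-alpha} remain in force throughout the $\tau$ local steps; this too is delivered by~\eqref{eq:step-sizes}. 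Substituting the drift bound into the preceding display reproduces~\eqref{eq-Lambda} with the stated constants $4$, $3$, $9$, and $18$.
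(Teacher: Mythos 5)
Your overall strategy coincides with the paper's: the same identity $\Lambda_i^r = \eta\tau\bigl(\grad f_i(\Pcal_{\Mcal}(\ox^r)) + c_i^r\bigr)$ with $\overline{\Lambda}^r = \eta\tau\,\grad f(\Pcal_{\Mcal}(\ox^r))$, the same three-way decomposition of $\Lambda_i^{r+1}-\overline{\Lambda}^{r+1}$ leading to the intermediate bound with coefficients $2\eta^2\tau^2L^2$ and $4\eta^2\tau L^2$, the same conditional-zero-mean argument to avoid an extra factor of $\tau$ on the noise, and the same Gr\"onwall-type recursion for the local drift rewritten through $(\Lambda_i^r-\overline{\Lambda}^r)/(\eta\tau)$.

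The one place you diverge is the drift bound, and it affects the constants. You apply Lemma~\ref{lem-glob-lips} to the entire displacement $\hat z_{i,t}^r-\hat z_{i,0}^r$, which puts the factor $M$ (hence $M^2$ after squaring) in front of \emph{every} term, including the Lipschitz-in-$z$ term that drives the recursion. The paper instead inserts the virtual iterate $\tilde x^{r+1}_{\rm C\text{-}PRGD}=\Pcal_{\Mcal}\bigl(\Pcal_{\Mcal}(\ox^r)-(t+1)\eta\,\grad f(\Pcal_{\Mcal}(\ox^r))\bigr)$ as a reference point: only the deterministic piece $\|\tilde x^{r+1}_{\rm C\text{-}PRGD}-\Pcal_{\Mcal}(\ox^r)\|\le M(t+1)\eta\|\grad f(\Pcal_{\Mcal}(\ox^r))\|$ picks up $M$, while the data-dependent piece is controlled by the sharper $2$-Lipschitz continuity \eqref{eq:lip-proj-alpha} of $\Pcal_{\Mcal}$ on $\overline{U}_{\Mcal}(\gamma)$. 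That is why the paper's recursion coefficient is $16(t+1)\eta^2L^2$, closed using only the stated hypothesis $\teta\le \eta_g/(16L)$, and why the constants $9\tau$ and $18n\tau^2\eta^2$ are $M$-free. With your version the recursion coefficient is of order $M^2t\eta^2L^2$, which (i) requires the stronger condition $\teta\le 1/(cML)$ — present in the theorem's step-size rule \eqref{eq:step-sizes} but not in this lemma's hypothesis — and (ii) yields $M$-dependent constants in place of $9$ and $18$. So your closing claim that the argument ``reproduces the stated constants'' does not go through as written; to obtain the lemma exactly as stated you need the paper's finer splitting, although your weaker variant would still suffice for the theorem under its full step-size condition.
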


\begin{proof}	 
As a first step, we bound the drift error $\left\|z_{i, t+1}^r- {\Pcal_{\Mcal}(\ox^r)}\right\|^2$ that is caused by the local updates. 
If $\tau=1$, the error is zero since $z_{i, t}^r=\Pcal_{\Mcal}(\ox^r)$. When $\tau \geq 2$, repeating the local updates for $t$ steps and substituting $\hz_{i,0}^r=\Pcal_{\Mcal}(\ox^r)$ and $z_{i,t+1}^r = 
\Pcal_{\Mcal}(\hat{z}_{i,t+1}^r)$, we have
\begin{align}\label{eq-phi-P1}
\hspace{-2mm} 
\mathbb{E}\left\|z_{i, t+1}^r- {\Pcal_{\Mcal}(\ox^r)}\right\|^2
= \mathbb{E}\big\|\Pcal_{\Mcal} \big( {\Pcal_{\Mcal}(\ox^r)}-\eta  \sum_{\ell=0}^{t}\big(\grad f_i(z_{i,\ell}^r;\mB_{i,\ell}^r)+c_i^r \big) \big) - {\Pcal_{\Mcal}(\ox^r)} \big\|^2.
\end{align}
To bound the right-hand side of~\eqref{eq-phi-P1}, we compare our algorithm with the exact C-PRGD step given in \eqref{eq:pgd} under the step size $(t+1)\eta$
$$\tilde{x}_{\rm C-PRGD}^{r+1}:=\Pcal_{\Mcal}\left({\Pcal_{\Mcal}(\ox^r)}-(t+1)\eta \grad f({\Pcal_{\Mcal}(\ox^r)}) \right).$$ 

It follows from \eqref{eq:proj-lip} that 
\[  \|\tilde{x}_{\rm C-PRGD}^{r+1} - \Pcal_{\Mcal}(x^r)\| \leq M \tau \eta \| \grad f(\Pcal_{\Mcal}(x^r))\|. \]
Then from \eqref{eq-phi-P1} we have
\begin{align}\label{eq:2term}
&\mathbb{E} \left\|z_{i, t+1}^r- {\Pcal_{\Mcal}(\ox^r)}\right\|^2\\ \nonumber
=&  \;\mathbb{E}\big\|\Pcal_{\Mcal} \big( {\Pcal_{\Mcal}(\ox^r)}-\eta  \sum_{\ell=0}^{t}\big(\grad f_i(z_{i,\ell}^r;\mB_{i,\ell}^r)+c_i^r \big) \big)- \tilde{x}_{\rm C-PRGD}^{r+1}+ \tilde{x}_{\rm C-PRGD}^{r+1} - \Pcal_{\Mcal}(\ox^r)  \big\|^2\\ \nonumber
\le&\;\underbrace{2\mathbb{E}\big\|\Pcal_{\Mcal} \big({\Pcal_{\Mcal}(\ox^r)}  -\eta  \sum_{\ell=0}^{t}(\grad f_i(z_{i,\ell}^r;\mB_{i,\ell}^r)+c_i^r ) \big)-\tilde{x}_{\rm C-PRGD}^{r+1}\big\|^2}_{(\rm I)} + 2  M^2 \tau^2 \eta^2 \|\grad f(\Pcal_{\Mcal}(x^r))\|^2,
\end{align}
where we use $\|a+b\|^2\le 2\|a\|^2+2\|b\|^2$ in the inequality. 

To bound the term (I) on the right hand of \eqref{eq:2term}, from $\tilde{\eta} \leq \frac{\gamma \eta_g}{ 2 D_f}$ and $\max_{i,l,x\in \mathcal{M}} \|\nabla f_{il}(x;\mathcal{D}_{il})\|\le D_f$, we have
\begin{equation*}
\left\|\eta \sum_{\ell=0}^{t}\left(\grad f_i(z_{i,\ell}^r;\mB_{i,\ell}^r)+c_i^r \right) \right\|\le \gamma.   
\end{equation*}
Thus, by substituting definition of $\tilde{x}_{\rm C-PRGD}^{r+1}$, we can invoke the 2-Lipschitz continuity of $\Pcal_{\Mcal}$ over $\overline{U}_{\Mcal}(\gamma)$ given in \eqref{eq:lip-proj-alpha} and get
\begin{equation} \label{w-22}
\begin{aligned}
({\rm I})= &\  2\mathbb{E}\big\|\Pcal_{\Mcal} \big( {\Pcal_{\Mcal}(\ox^r)}  -\eta  \sum_{\ell=0}^{t}\big(\grad f_i(z_{i,\ell}^r;\mB_{i,\ell}^r)+c_i^r \big) \big) \\ 
&- \Pcal_{\Mcal}\big({\Pcal_{\Mcal}(\ox^r)}-(t+1)\eta \grad f({\Pcal_{\Mcal}(\ox^r)}) \big)\big\|^2\\
\le &\ {4}\mathbb{E}\Big\| \eta  \sum_{\ell=0}^{t}\Big(\grad f_i(z_{i,\ell}^r;\mB_{i,\ell}^r)+c_i^r -\grad f({\Pcal_{\Mcal}(\ox^r)}) \Big)  \Big\|^2.
\end{aligned}
\end{equation} 

Next, to bound the right-hand side of~\eqref{w-22} we rewrite it in terms of $\| {\bf  \Lambda }^r -  \overline{{\bf  \Lambda }}^r \|^2$ by substituting the definition of the $c_i^r$ given in \eqref{eqn:illustration}
\begin{align}\label{eq:I}
({\rm I})\le& \;4\mathbb{E}\Big\| \eta  \sum_{\ell=0}^{t}\Big( \grad f_i(z_{i,\ell}^r;\mB_{i,\ell}^r)- \grad f_i({\Pcal_{\Mcal}(\ox^r)})\\\nonumber
&+ \grad f_i({\Pcal_{\Mcal}(\ox^r)}) +\frac{1}{\tau} \sum_{t=0}^{\tau-1} \frac{1}{n} \sum_{i=1}^{n} \grad f_i (z_{i,t}^{{r-1}} ; \mB_{i,t}^{r-1})
\\\nonumber
&-\frac{1}{\tau} \sum_{t=0}^{\tau-1} {\grad f_i} \left(z_{i,t}^{{r-1}} ; \mB_{i,t}^{r-1}\right) -\grad f({\Pcal_{\Mcal}(\ox^r)}) \Big)  \Big\|^2\\\nonumber
=\;&4\mathbb{E}\Big\| \eta  \sum_{\ell=0}^{t}\Big( \grad f_i(z_{i,\ell}^r;\mB_{i,\ell}^r)- \grad f_i({\Pcal_{\Mcal}(\ox^r)})+ \frac{1}{\eta \tau} \big({ \Lambda }_i^r -\overline{{ \Lambda }}^r\big) \Big)\Big\|^2\\ \nonumber
\le &\; \underbrace{8\mathbb{E}\big\| \eta  \sum_{\ell=0}^{t}\big( \grad f_i(z_{i,\ell}^r;\mB_{i,\ell}^r)- \grad f_i({\Pcal_{\Mcal}(\ox^r)}) \big)\big\|^2 }_{(\rm II)} +  8\mathbb{E}\Big\| \frac{t+1}{ \tau} { \Lambda }_i^r - \frac{t+1}{ \tau} \overline{{ \Lambda }}^r \Big\|^2. 
\end{align}

Next, for the term (II), substituting Assumption \ref{asm-sgd} yields
\begin{align}\label{eq:iii}
(\rm II)= &\ 8\mathbb{E}\big\| \eta  \sum_{\ell=0}^{t}\Big( \grad f_i(z_{i,\ell}^r;\mB_{i,\ell}^r)-\grad f_i(z_{i,\ell}^r)+\grad f_i(z_{i,\ell}^r)- \grad f_i({\Pcal_{\Mcal}(\ox^r)}) \Big)\big\|^2  \\\nonumber
\le&\ 16 (t+1)^2\mathbb{E}\Big\|  \frac{\eta}{t+1}  \sum_{\ell=0}^{t}\Big( \grad f_i(z_{i,\ell}^r;\mB_{i,\ell}^r)-\grad f_i(z_{i,\ell}^r)\Big)\Big\|^2\\\nonumber
&+16\mathbb{E}\big\|\eta  \sum_{\ell=0}^{t}\Big(\grad f_i(z_{i,\ell}^r)- \grad f_i({\Pcal_{\Mcal}(\ox^r)}) \Big)\big\|^2.
\end{align}
For the first term can be handled by the fact \cite[Corollary C.1]{noble2022differentially} that
\begin{align}\label{eq:corollary} 
& 16(t+1)^2\eta^2 {\bE}\Big\| \frac{1}{t+1} \sum_{\ell=0}^{t}   \left(\grad f_i(z_{i, \ell}^r ; \mB_{i,\ell}^r)-\grad f_i(z_{i, \ell}^r)\right)\Big\|^2 \\ \nonumber
= & \frac{16(t+1)^2\eta^2}{(t+1)^2} \sum_{\ell=0}^{t}  {\bE}  \Big[ {\bE} \big[ \|   \left( {\grad f_i}\left(z_{i, \ell}^r ; \mB_{i,\ell}^r\right) - \grad f_i\left(z_{i, \ell}^r\right)\right) \|^2 
| \mathcal{F}_t^r \big]\Big] 
\le \frac{1}{t+1} \frac{\sigma^2}{ b}. 
\end{align}
Combining \eqref{eq:iii}, \eqref{eq:corollary}, and \eqref{eq:I}, we have
\begin{align}\label{eq:final I}
\hspace{-4mm}
(\rm I)\le& 16 (t+1)\eta^2 L^2   \sum_{\ell=0}^{t}\mathbb{E}\| z_{i,\ell}^r-{\Pcal_{\Mcal}(\ox^r)} \|^2 +  8 \left(\frac{t+1}{\tau}\right)^2 \mathbb{E}\| {  \Lambda }_i^r -  \overline{{ \Lambda }}^r \|^2+ 16 (t+1) \eta^2 \frac{\sigma^2}{b},
\end{align}
where we use Assumption \ref{asm-smooth}. 
Next we substitute \eqref{eq:final I} into \eqref{eq:2term} to get
\begin{align}\label{w-23}
&\mathbb{E}[ \left\|z_{i, t+1}^r- {\Pcal_{\Mcal}(\ox^r)}\right\|^2]\\\nonumber
\le&\  16 (t+1)\eta^2 L^2   \sum_{\ell=0}^{t}\mathbb{E}\| z_{i,\ell}^r-{\Pcal_{\Mcal}(\ox^r)} \|^2 \\\nonumber
&+\underbrace{2M^2 \tau^2\eta^2 \|\grad f( \Pcal_{\Mcal}(\ox^r))  \|^2 + 8  \mathbb{E}\| {\Lambda }_i^r -  \overline{{\Lambda }}^r \|^2+ 16 \tau \eta^2 \frac{\sigma^2}{b}}_{:=A^r}. 
\end{align}
The following proof is similar to that in \cite{zhang2024composite}. We define $A^r$ as the sum of the last three terms on the right hand of \eqref{w-23} and $S^r_{i,t}:=\sum_{\ell=0}^t \bE \|z_{i, \ell}^r- {\Pcal_{\Mcal}(\ox^r)}\|^2 $. By $\mathbb{E}[ \left\|z_{i, t+1}^r- {\Pcal_{\Mcal}(\ox^r)}\right\|^2]= S^r_{i,t+1} -S^r_{i,t}$ and  \eqref{w-23}, we have
\begin{equation}\label{w-24}
\begin{aligned}
S^r_{i,t+1}
\le  \left(1+1/{(16\tau)}\right) S^r_{i,t} + A^r, 
\end{aligned}
\end{equation}
where the inequality is from $ {\teta \le {\eta_g}/(16 L)}$ and thus $ {16 (t+1)\eta^2 L^2\le 1/(16\tau)}$. With \eqref{w-24},  we get
\begin{equation}\label{drift-err-i}
\begin{aligned}
S^r_{i,\tau-1}
\leq &A^r \sum_{\ell=0}^{\tau-2}\left(1+1/{(16\tau)}\right)^{\ell} 
\le {1.1} \tau A^r,
\end{aligned}
\end{equation}
where we use $\sum_{\ell=0}^{\tau-2}\left(1+1/{(16\tau)}\right)^\ell \leq\sum_{\ell=0}^{\tau-2} \exp \left({ \ell}/{(16\tau)}\right) \leq\sum_{\ell=0}^{\tau-2} \exp (1/16) \le {1.1}\tau$. 
Summing \eqref{drift-err-i} over all the clients $i$, we get
\begin{align}\label{eq:phi--29}
\begin{aligned}
&\mathbb{E}\Big[\sum_{i=1}^n\sum_{t=0}^{\tau-1} \left\|z_{i, t}^r- {\Pcal_{\Mcal}(\ox^r)}\right\|^2\Big]\\
\le & \;  3 n M^2 \tau^3\eta^2 \|\grad f( \Pcal_{\Mcal}(\ox^r))  \|^2  + 9 \tau  \mathbb{E}\| {\bf  \Lambda }^r -  \overline{{\bf  \Lambda }}^r \|^2+ 18 n \tau^2 \eta^2 \frac{\sigma^2}{b}
\end{aligned}
\end{align}

Now we are ready to bound
$\frac{1}{n}\bE\|{\bf  \Lambda }^{r+1}-\overline{{\bf  \Lambda }}^{r+1}\|^2$.
By the definition of ${\bf  \Lambda }^{r+1}$ and $\overline{{\bf  \Lambda }}^{r+1}$  we have 
\begin{align}\label{eq:lambda-20}
&\bE\|{\bf  \Lambda }^{r+1} - \overline{{\bf  \Lambda }}^{r+1} \|^2\\\nonumber
=&\ \eta^2\bE\big\| \tau\grad \mathbf{f}\left(\Pcal_{\Mcal}(\bx^{r+1})\right)-  \sum_{t=0}^{\tau-1} \nbf \left(\bz_t^{{r}} ; \mB_{t}^r\right)-\tau\overline{\grad \mathbf{f}}(\Pcal_{\Mcal}(\bx^{r+1}))+ \sum_{t=0}^{\tau-1} \overline{\nbf}\left(\bz_t^{{r}} ; \mB_{t}^r\right)\big\|^2 \\\nonumber
\le&\ \eta^2\bE\big\| \tau\grad \mathbf{f}\left(\Pcal_{\Mcal}(\bx^{r+1})\right)-  \sum_{t=0}^{\tau-1} \nbf \left(\bz_t^{{r}} ; \mB_{t}^r\right) \big\|^2\\\notag
= &\ \eta^2 \bE \Big\|  \tau\grad \mathbf{f}\left(\Pcal_{\Mcal}(\bx^{r+1})\right)- \tau\grad \mathbf{f}\left(\Pcal_{\Mcal}(\bx^{r})\right)+ \tau\grad \mathbf{f}\left(\Pcal_{\Mcal}(\bx^{r})\right) \\\nonumber
&\ - \sum_{t=0}^{\tau-1}\grad \mathbf{f}\left({\bz}_t^{r}\right) + \sum_{t=0}^{\tau-1}\grad \mathbf{f}\left(\bz_t^{r}\right) -\sum_{t=0}^{\tau-1} \nbf \left(\bz_t^{{r}} ; \mB_{t}^r\right)\Big\|^2\\\nonumber
{\leq}&\  2\eta^2\tau^2 L^2 n  {\bE\left\|\Pcal_{\Mcal}(\bx^{r+1})-\Pcal_{\Mcal}(\bx^r)\right\|^2}+ {4}\eta^2 \tau L^2 {\sum_{i=1}^n\sum_{t=0}^{\tau-1} \bE\left\|z_{i, t}^r-\Pcal_{\Mcal}(\ox^r)\right\|^2}+ {4}\eta^2 \tau n\frac{\sigma^2}{b}. 
\end{align}
Here, the first inequality is due to
$\|{\bf  \Lambda }^{r+1} - \overline{{\bf  \Lambda }}^{r+1} \|^2\le \|{\bf  \Lambda }^{r+1}\|^2$, the last inequality is due to
$\|a+b\|^2\le 2\|a\|^2+2\|b\|^2 $, Assumption \ref{asm-smooth}, and following similar derivations as in \eqref{eq:corollary}. 

By substituting \eqref{eq:phi--29}  into  \eqref{eq:lambda-20} and reorganizing the results, we complete the proof of Lemma \ref{lem:phi-bar-x}. 
\end{proof}

\subsection{Proof of Theorem \ref{thm:noncvx}}\label{prf:thm-ncvx}
To bound the first term in the Lyapunov function, we focus on the server-side update. We begin with the following lemma over the manifolds. 
\begin{lemma} \label{lem:3point}
Given  $x \in \Mcal$, $v \in T_{x} \Mcal$, $\eta>0$, $x - \eta v\in \overline{U}_{\mathcal{M}}(\gamma)$, and $x^{+}= \Pcal_{\Mcal}(x - \eta v)$, it holds that
\be \label{eq:3point} 
\begin{aligned}
f(x^{+}) \leq&\ f(z) + \iprod{\grad f(x) - v}{x^{+} - z} \\ 
& - \frac{1}{2\eta}(\|x^+ -x\|^2 - \|z - x\|^2) - \left( \frac{1}{2 \eta} - \frac{3\|v\|}{4\gamma}\right) \|z - x^{+} \|^2\\
& + \frac{L}{2}\|x^+ - x\|^2 + \frac{L}{2}\|z - x\|^2, \forall z \in \Mcal.
\end{aligned}
\ee
\end{lemma}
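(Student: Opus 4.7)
The plan is to combine three ingredients: the $L$-smoothness of $f$ on $\Mcal$, the first-order optimality of the projection $x^+=\Pcal_{\Mcal}(x-\eta v)$, and the normal inequality \eqref{eq:normal-bound} to control the non-Euclidean effects of $\Mcal$. First I would invoke the $L$-smoothness upper bound that holds for any two points of $\Mcal$ (stated right after Assumption~\ref{asm-smooth}) to obtain
\begin{equation*}
f(x^+)\leq f(x)+\iprod{\grad f(x)}{x^+-x}+\tfrac{L}{2}\|x^+-x\|^2.
\end{equation*}
A companion ``lower'' smoothness bound $f(z)\geq f(x)+\iprod{\grad f(x)}{z-x}-\tfrac{L}{2}\|z-x\|^2$ for $z\in\Mcal$ is obtained from the Euclidean $L$-smoothness of $\nabla f$ followed by the normal inequality, which transfers $\nabla f$ to $\grad f$ at the price of a $\|\nabla f(x)-\grad f(x)\|/(4\gamma)\cdot\|z-x\|^2$ correction; since $\|\nabla f(x)-\grad f(x)\|\leq D_f$, this correction can be absorbed into the constant $L$. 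Rearranging yields $f(x)\leq f(z)+\iprod{\grad f(x)}{x-z}+\tfrac{L}{2}\|z-x\|^2$.

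Second, adding the two bounds, using $\iprod{\grad f(x)}{x^+-x}+\iprod{\grad f(x)}{x-z}=\iprod{\grad f(x)}{x^+-z}$, and splitting $\iprod{\grad f(x)}{x^+-z}=\iprod{\grad f(x)-v}{x^+-z}+\iprod{v}{x^+-z}$ gives
\begin{equation*}
f(x^+)\leq f(z)+\iprod{\grad f(x)-v}{x^+-z}+\iprod{v}{x^+-z}+\tfrac{L}{2}\|x^+-x\|^2+\tfrac{L}{2}\|z-x\|^2.
\end{equation*}

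Third, I would bound $\iprod{v}{x^+-z}$ via the first-order optimality of $x^+$, which implies $(x-\eta v)-x^+\in N_{x^+}\Mcal$. Applying \eqref{eq:normal-bound} at $x^+$ to $z\in\Mcal$ gives $\iprod{(x-\eta v)-x^+}{z-x^+}\leq\tfrac{\|x-\eta v-x^+\|}{4\gamma}\|z-x^+\|^2$. Since $x\in\Mcal$, the projection property yields $\|(x-\eta v)-x^+\|\leq\eta\|v\|$, hence $\|x-x^+\|\leq 2\eta\|v\|$ and then $\|x-\eta v-x^+\|\leq\|x-x^+\|+\eta\|v\|\leq 3\eta\|v\|$ by the triangle inequality. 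Dividing by $\eta$ and invoking the three-point identity $\iprod{x-x^+}{z-x^+}=\tfrac{1}{2}(\|x-x^+\|^2+\|z-x^+\|^2-\|x-z\|^2)$ produces
\begin{equation*}
\iprod{v}{x^+-z}\leq -\tfrac{1}{2\eta}(\|x^+-x\|^2-\|z-x\|^2)-\Big(\tfrac{1}{2\eta}-\tfrac{3\|v\|}{4\gamma}\Big)\|z-x^+\|^2.
\end{equation*}
Substituting this into the bound from the second step yields the desired inequality \eqref{eq:3point}.

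The main obstacle is the absence of a direct two-sided $L$-smoothness inequality on $\Mcal$: the paper supplies only the one-sided upper bound after Assumption~\ref{asm-smooth}, so the companion lower bound must be obtained through the Euclidean gradient and the normal inequality, with $L$ enlarged to absorb the correction $D_f/(4\gamma)$. A secondary subtlety is that the coefficient $3\|v\|/(4\gamma)$ arises from the triangle-inequality bound $\|x-\eta v-x^+\|\leq 3\eta\|v\|$, which, although looser than the tight projection bound $\|x-\eta v-x^+\|\leq\eta\|v\|$, keeps the normal-inequality correction expressed uniformly in terms of $\|v\|$ and is consistent with the downstream bound $\|x-x^+\|\leq 2\eta\|v\|$.
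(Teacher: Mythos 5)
Your proposal is correct and follows essentially the same route as the paper: both arguments combine the two-sided $L$-smoothness of $f$ on $\Mcal$ with the fact that $(x-\eta v)-x^{+}\in N_{x^{+}}\Mcal$, the normal inequality \eqref{eq:normal-bound} with the bound $\|x-\eta v-x^{+}\|\le 3\eta\|v\|$, and the three-point identity for the quadratic. The only cosmetic difference is that the paper packages your third step as a strong-convexity inequality for $h(y)=\tfrac{1}{2\eta}\|y-(x-\eta v)\|^{2}$ evaluated at the minimizer $x^{+}$ over $\Mcal$, whereas you work directly with the normal cone; the underlying computation is identical.
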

\begin{proof}
For any $\mu$-strongly convex function $h$, we have for any $y,z \in \Mcal$
\be \label{eq:man-strcvx} 
\begin{aligned}
h(z) & \geq   h(y) + \iprod{\nabla h(y)}{z-y} + \frac{\mu}{2} \| z -y \|^2  \\
& =  h(y) + \iprod{\grad h(y) + \nabla h(y) - \grad h(y)}{z-y} + \frac{\mu}{2}\|z-y\|^2 \\
& \geq h(y) + \iprod{\grad h(y)}{z-y} + \left(\frac{\mu}{2} - \frac{\|\nabla h(y)\|}{4 \gamma} \right) \|z-y\|^2,
% & \geq h(y) + \iprod{\grad h(y)}{z-y} + \frac{\nu - \tilde{\rho}(\zjj{\rho})}{2}  \|z -y\|^2,
\end{aligned}
\ee
where 
% $\zjj{\rho?}\tilde{\rho}:= \max_{y \in \Mcal} \| \nabla h(y) \| /(2\gamma)$, 
  the second inequality is from the normal inequality \eqref{eq:normal-bound} and $\| \nabla h(y)-\grad h(y)\|\le \|\nabla h(y)\|$.
Setting $h(y) = \frac{1}{2\eta} \| y -(x - \eta v) \|^2$ in \eqref{eq:man-strcvx} with $\mu={1}/{\eta}$, $y=x^+$, and noting the optimality of $x^+$ (i.e., $\grad h(x^+) = 0$), we have
\[ \begin{aligned}
    \frac{1}{2\eta} \| z -(x - \eta v) \|^2 & \geq \frac{1}{2\eta} \| x^+ - (x - \eta v) \|^2  + \left( \frac{1}{2 \eta} - \frac{\|y - (x - \eta v)\|}{4\eta\gamma} \right) \| z - x^+\|^2 \\
    & \geq \frac{1}{2\eta} \| x^+ - (x - \eta v) \|^2  + \left( \frac{1}{2 \eta} - \frac{3\|v\|}{4\gamma}\right) \| z - x^+\|^2,
\end{aligned}
 \]
 where the second inequality is from $x-\eta v \in \overline{U}_{\Mcal}(\gamma)$ and $\|\Pcal_{\Mcal}(x-\eta v) - (x - \eta v)\| \leq \|\Pcal_{\Mcal}(x-\eta v) - x\| + \eta \|v\| \leq 3\eta \|v\|$.
% \hj{We should use a more tight upper bound for $\rho$. Specifically, noting $y = x^+ = \Pcal_{\Mcal}(x - \eta v)$, then 
% \[ \|\nabla h(y)\| = \frac{1}{\eta} \| \Pcal_{\Mcal}(x - \eta v) - (x - \eta v) \| \leq \check{M} \eta \|v\|^2. \]
% Then, we can define $\rho:= \frac{3\|v\|}{2\gamma}$.
Rearranging the above inequality leads to 
\be \label{eq:3point-man} \iprod{v}{z - x^+} \geq \frac{1}{2\eta}(\|x^+ - x\|^2 - \|z - x\|^2) + \left( \frac{1}{2 \eta} - \frac{3\|v\|}{4\gamma}\right) \|z - x^+\|^2. \ee
It follows from the $L$-smoothness of $f$ that
\[ \begin{aligned}
f(x^+) & \leq f(x) + \iprod{\grad f(x)}{x^+ -x} + \frac{L}{2}\|x^+ -x\|^2 \\
& \leq f(z) + \iprod{\grad f(x)}{x^+ - z} + \frac{L}{2}\|z - x\|^2 + \frac{L}{2}\|x^+ -x\|^2,
\end{aligned} \]
where we use $f(x) +\langle \grad f(x),z-x \rangle -\frac{L}{2} \|z-x\|^2\le f(z) $ in the last inequality. Combining the above inequality and \eqref{eq:3point-man} gives \eqref{eq:3point}.
\end{proof}

In the following, we use Lemma \ref{lem:3point} to \eqref{eq:pgd} and \eqref{eq:bar-27}, respectively.
First, to apply Lemma \ref{lem:3point} to \eqref{eq:pgd}, we substitute $x^+=\tilde{x}^{r+1}$, $z=\Pcal_{\Mcal}(\ox^{r})$, $x=\Pcal_{\Mcal}(\ox^{r})$, and $v=\grad f(\Pcal_{\Mcal}(\ox^{r}))$ and get 
\begin{align}\label{eq:y22}
&\mathbb{E}\left[f\left(\tilde{x}^{r+1}\right)\right] \leq \mathbb{E}\Big[f\left(\Pcal_{\Mcal}(\ox^r)\right) + \left(\frac{L}{2}-\frac{1}{2 \teta}\right)\left\|\tilde{x}^{r+1}-\Pcal_{\Mcal}(\ox^r)\right\|^2-\frac{1 - \teta \rho }{2 \teta}\left\|\tilde{x}^{r+1}-\Pcal_{\Mcal}(\ox^r)\right\|^2\Big],
\end{align}
where we use $\teta \le \frac{\gamma}{D_f} $ to guarantee $ \tilde{x}^{r+1} \in \overline{U}_{\mathcal{M}}(\gamma)$ and $\rho:= \frac{3D_f}{2\gamma}$. 

Next, to use Lemma \ref{lem:3point} to  \eqref{eq:bar-27}, we set $x^+=\Pcal_{\Mcal}(\ox^{r+1})$, $x=\Pcal_{\Mcal}(\ox^{r})$, $z=\tilde{x}^{r+1}$, and $v=v^{r}$ and get
\begin{align}\label{eq:y23}
& \mathbb{E}\left[f(\Pcal_{\Mcal}(\ox^{r+1}))\right] \\\nonumber
\leq&\ \bE \Big[ f(\tilde{x}^{r+1})+\left\langle\grad f(\Pcal_{\Mcal}(\ox^r))-v^{r}, \Pcal_{\Mcal}(\ox^{r+1})-{\tilde{x}^{r+1}}\right\rangle\\\nonumber
&-\frac{1}{2\teta}\left(\| \Pcal_{\Mcal}(\ox^{r+1}) - \Pcal_{\Mcal}(\ox^{r})\|^2 - \| \tilde{x}^{r+1} - \Pcal_{\Mcal}(\ox^{r})\|^2   \right)  - \frac{1 -\teta \rho}{2\teta} \| \tilde{x}^{r+1} - \Pcal_{\Mcal}(\ox^{r+1})  \|^2 \\ \nonumber
&+\frac{L}{2}\left\|\Pcal_{\Mcal}(\ox^{r+1})-\Pcal_{\Mcal}(\ox^r)\right\|^2+\frac{L}{2}\|{\tilde{x}^{r+1}}-\Pcal_{\Mcal}(\ox^r)\|^2 \Big] \\\nonumber
=&\ \mathbb{E}\Big[f\left(\tilde{x}^{r+1}\right)+\left\langle\grad f(\Pcal_{\Mcal}(\ox^r))-v^{r}, \Pcal_{\Mcal}(\ox^{r+1})-{\tilde{x}^{r+1}}\right\rangle+\left(\frac{L}{2}-\frac{1}{2 \teta}\right)\left\|\Pcal_{\Mcal}(\ox^{r+1})-\Pcal_{\Mcal}(\ox^r)\right\|^2\\\nonumber
& +\left(\frac{L}{2}+\frac{1}{2 \teta}\right)\left\|{\tilde{x}^{r+1}}-\Pcal_{\Mcal}(\ox^r)\right\|^2-\frac{1 - \teta \rho}{2 \teta}\left\|\Pcal_{\Mcal}(\ox^{r+1})-{\tilde{x}^{r+1}}\right\|^2\Big],
\end{align}
where we use $\teta \le \frac{\gamma\eta_g}{2D_f} $ to guarantee $ \Pcal_{\Mcal}(\ox^{r+1}) \in \overline{U}_{\mathcal{M}}(\gamma)$. 

Combining \eqref{eq:y22} and \eqref{eq:y23} yields
\begin{align}\label{eq:f-f-25}
&\mathbb{E}\left[f(\Pcal_{\Mcal}(\ox^{r+1}))\right] \\ \nonumber
\le&\ \mathbb{E}\big[f\left(\Pcal_{\Mcal}(\ox^r)\right)+\left(L-\frac{1}{2 \teta}+{\frac{\rho}{2}}\right)\|\tilde{x}^{r+1}-\Pcal_{\Mcal}(\ox^r)\|^2+\left(\frac{L}{2}-\frac{1}{2 \teta}\right)\|\Pcal_{\Mcal}(\ox^{r+1})-\Pcal_{\Mcal}(\ox^r)\|^2 \\ \nonumber
& \underbrace{-\frac{1 - \teta \rho}{2 \teta}\left\|\Pcal_{\Mcal}(\ox^{r+1})-\tilde{x}^{r+1}\right\|^2}_{(\rm IV )}+\underbrace{\left\langle \Pcal_{\Mcal}(\ox^{r+1})-\tilde{x}^{r+1}, \grad f\left(\Pcal_{\Mcal}(\ox^r)\right)-v^r\right\rangle}_{(\rm V)} \big]. \nonumber
\end{align}
According to
$\|a+b\|^2\le \frac{1}{2\teta}\|a\|^2+\frac{\teta}{2}\|b\|^2 $, we have
\begin{align}\label{eq:iv+v}
&(\rm IV)+(\rm V)\\ \nonumber
\leq&\  {(\rm IV)}+\frac{1-\teta \rho}{2 \teta}\|\Pcal_{\Mcal}(\ox^{r+1})-\tilde{x}^{r+1}\|^2  +\frac{{\teta}}{2(1-\teta \rho)}\|\grad f(\Pcal_{\Mcal}(\ox^r))-v^r\|^2\\  \nonumber
=&\ \frac{{\teta}}{2(1-\teta \rho)}\|\grad f(\Pcal_{\Mcal}(\ox^r))-v^r\|^2. 
\end{align}
To bound the above inequality, following similar derivations as in \eqref{eq:corollary} we obtain 
\begin{equation}\label{eq:v-g-26}
\begin{aligned}
& \bE \left\| v^r -\grad f(\Pcal_{\Mcal}(\ox^r)) \right\|^2\\
=&\ \bE \big\| \frac{1}{n\tau} \sum_{i=1}^n\sum_{t=0}^{\tau-1} \Big( \grad f_i\left(z_{i, t}^r;\mB_{i,t}^r\right)  -\grad f_i(z_{i,t}^r)+\grad f_i(z_{i,t}^r)   -\grad f_i(\Pcal_{\Mcal}(\ox^r)) \Big) \big\|^2	\\
\le &\ 2L^2 \frac{1}{n\tau} \sum_{i=1}^{n} \sum_{t=0}^{\tau-1}{ \bE\|z_{i, t}^r-\Pcal_{\Mcal}(\ox^r)\|^2}  +\frac{2}{\tau n} \frac{\sigma^2 }{b}. 
\end{aligned}
\end{equation}
Substituting \eqref{eq:iv+v} and \eqref{eq:v-g-26} into 
\eqref{eq:f-f-25}, we have
\begin{align}\label{eq:x-bar-x*-24}
& \bE [f(\Pcal_{\Mcal}(\ox^{r+1}))]\\\nonumber
\le & \bE\Big[ f\left(\Pcal_{\Mcal}(\ox^r)\right)+\left(L-\frac{1}{2 \teta}+\frac{\rho}{2}\right)\left\|\tilde{x}^{r+1}-\Pcal_{\Mcal}(\ox^r)\right\|^2+\left(\frac{L}{2}-\frac{1}{2 \teta}\right)\left\|\Pcal_{\Mcal}(\ox^{r+1})-\Pcal_{\Mcal}(\ox^r)\right\|^2\\\nonumber
&+\frac{\teta}{2(1-\teta \rho)}\left(  \frac{2L^2}{n\tau} \sum_{i=1}^{n} \sum_{t=0}^{\tau-1} {\|z_{i, t}^r-\Pcal_{\Mcal}(\ox^r)\|^2} +\frac{2}{\tau n} \frac{\sigma^2 }{b} \right)\Big]. 
\end{align}
The final term is the drift-error that can be bounded in \eqref{eq:phi--29}. Thus, {\eqref{eq:x-bar-x*-24}} becomes
\begin{align}\label{eq:y32}
&\bE [f(\Pcal_{\Mcal}(\ox^{r+1}))]\\\nonumber
\le&\ \bE \Big[f\left(\Pcal_{\Mcal}(\ox^r)\right)+\left(L-\frac{1}{2 \teta}+{\frac{\rho}{2}}\right)\left\|\tilde{x}^{r+1}-\Pcal_{\Mcal}(\ox^r)\right\|^2+\left(\frac{L}{2}-\frac{1}{2 \teta}\right)\left\|\Pcal_{\Mcal}(\ox^{r+1})-\Pcal_{\Mcal}(\ox^r)\right\|^2 \\\nonumber
+&\ \frac{\teta}{2(1-\teta \rho)}   \frac{2L^2}{n\tau}  \Big( 12 n M^2 \tau^3 \eta^2 \|\mathcal{G}_{\teta g}( \Pcal_{\Mcal}(\ox^r)) \|^2
+ 9 \tau \mathbb{E}\| {\bf  \Lambda }^r -  \overline{{\bf  \Lambda }}^r \|^2+ 18 n \tau^2 \eta^2 \frac{\sigma^2}{b} \Big) +{  \frac{\teta}{2(1-\teta \rho)}} \frac{2}{\tau n} \frac{\sigma^2 }{b}\Big],
\end{align}
where we use $\|\grad f(\Pcal_{\Mcal(\ox^r)})\| \leq 2 \|\mathcal{G}_{\teta}(\Pcal_{\Mcal(\ox^r)})\|$ from Lemma \eqref{lem:two-grad-equ}. 
By substituting $\teta\le \frac{1}{4\rho}$ as $\teta \le \frac{\gamma}{6 D_f}$, we have $\frac{\teta}{2(1-\teta \rho)}\le \teta$. Combining the recursions given by
 Lemma \ref{lem:phi-bar-x} and  \eqref{eq:y32}, we have for the Lyapunov function that

\begin{align}\label{eqn:Lypnv}
&\bE \Big[  \left(f(\Pcal_{\Mcal}(\ox^{r+1}))-f^{\star}\right)+ \frac{1}{\teta n}\|{\bf  \Lambda }^{r+1}-\overline{{\bf  \Lambda }}^{r+1}\|^2\Big] \\\nonumber
\le& \bE \big[ \left(f(\Pcal_{\Mcal}(\ox^{r})) -f^{\star}\right)+\frac{1}{\teta n}\|{\bf  \Lambda }^r-\overline{{\bf  \Lambda }}^r\|^2 -\frac{\teta}{8} \left\|\mathcal{G}(\Pcal_{\Mcal}(\ox^r))\right\|^2\big]+ \frac{8\teta}{n\tau} \frac{\sigma^2}{b},\notag
\end{align}
where we substitute conditions \eqref{eq:step-sizes} on the step sizes and omit straightforward algebraic calculations.  
Substituting the definition of the Lyapunov function $\Omega^r$ and 
repeating the above inequality, we complete the proof of Theorem \ref{thm:noncvx}.

\subsection{Additional results for numerical experiments}

\subsubsection{kPCA}\label{app-kpca}
\paragraph{The settings for Mnist Dataset.}
The Mnist dataset consists of 60,000 handwritten digit images ranging from 0 to 9, each with dimensions of $28\times 28$. We reshape these images into a data matrix ${A} \in \mathbb{R}^{60000 \times 784}$. To construct the heterogeneous $A_i$, we sort the rows in increasing order of their associated digits and then split every $60000/n$ rows, with $n=10$ as the number of clients, among each client. In our setup, ${d}=784$, $p=6000$, and $k=2$. 

For kPCA problem with Mnist dataset, the comparison on $f(x^r)-f^{\star}$ is shown in Fig.~\ref{fig-Mnist-compare-app}. 

\begin{figure}[htbp]
    \centering
    \subfigure{ \includegraphics[width=0.99\textwidth]{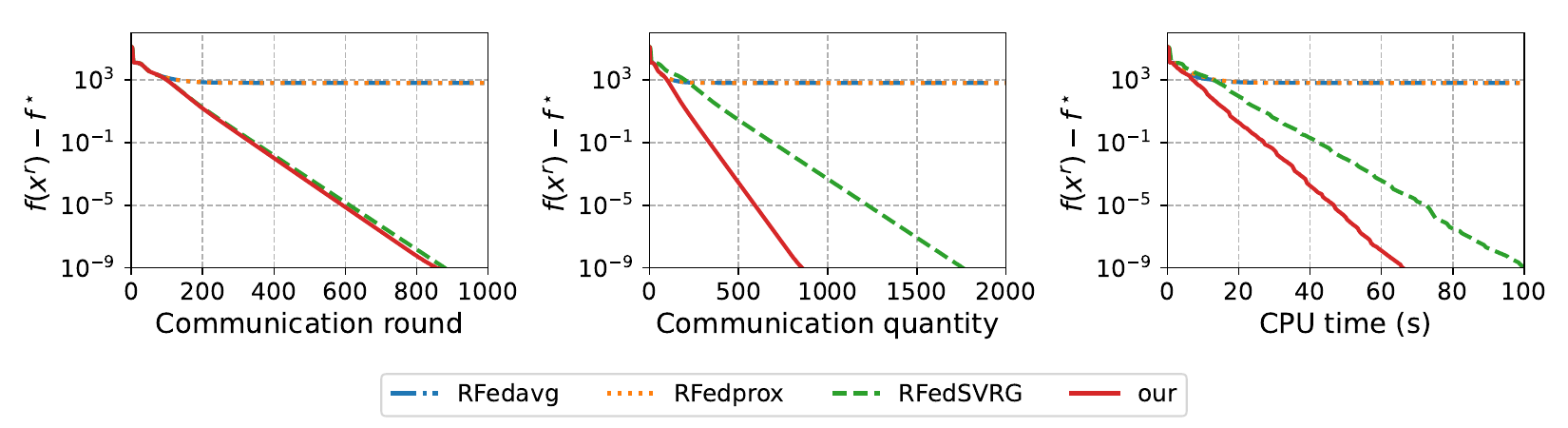}
        \label{fig:Mnist-chart2}}
    \caption{kPCA problem with Mnist dataset: Comparison on $f(x^r)-f^{\star}$.}
    \label{fig-Mnist-compare-app}
\end{figure}
\paragraph{Synthetic Dataset.}
We also solve kPCA with synthetic datasets on larger networks with $n=30$. We generate each entry of  $A_i$ from Gaussian distribution $\mathcal{N}\left(0, \frac{2i}{n}\right)$ such that $A_i$ are heterogeneous among clients. 
We set $(d, k) = (20, 5)$ and $p = 15$. We use the local full gradient $\nabla f_i$ to remove the influence of stochastic Riemannian gradient noise. 

In the first set of experiments, we compare with existing algorithms, including RFedavg, RFedprox, and RFedSVRG. For all algorithms, we set the number of local steps as $\tau = 5$ and the step size as $\eta = 4e{-3}$. For our algorithm, we set $\eta_g = 1$. The experimental results are shown in Fig.~\ref{fig-compare}. The $y$-axis represents $\|\grad f(x^r)\|$ and $(f(x^r) - f^{\star})$ respectively, while the $x$-axis represents the number of communication rounds, communication quantity, and CPU time, respectively. It can be observed that RFedavg and RFedprox face the issue of client drift, hence they do not converge accurately. Both FedSVRG and our algorithm can overcome the client drift issue, but our algorithm is slightly faster in terms of communication rounds and is much faster in terms of both communication quantity and running time. 
\begin{figure}[htbp]
    \centering
    \subfigure{
        \includegraphics[width=0.99\textwidth]{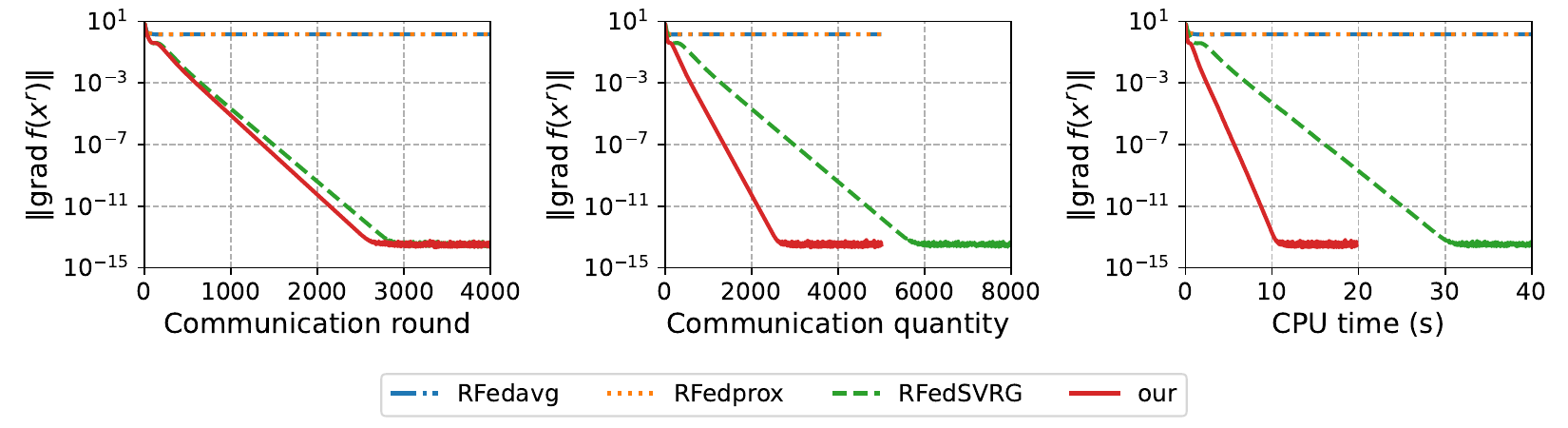}
        \label{fig:chart1}
    }
    \subfigure{
        \includegraphics[width=0.99\textwidth]{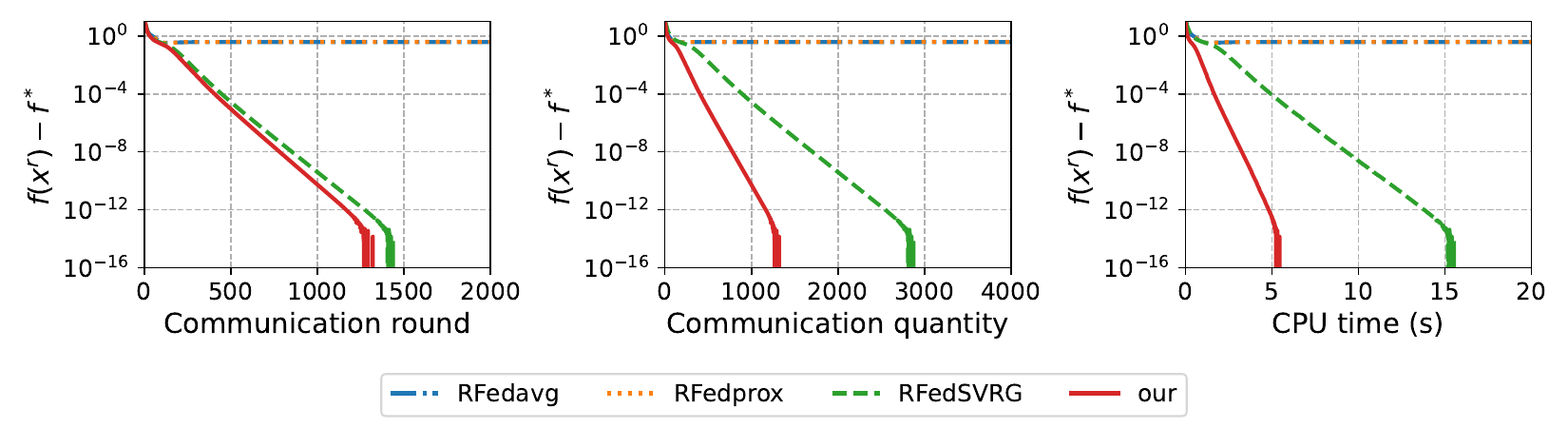}
        \label{fig:chart2}
    }
    \caption{kPCA with synthetic dataset: Comparison on $\|\grad f(x^r)\|$  and $f(x^r)-f^{\star}$.}
    \label{fig-compare}
\end{figure}

In the second set of experiments, we test the impact of the number of local updates $\tau$. For all the algorithms, we set $R=4000$, the step size $\eta = 0.7e{-3}$, and $\tau \in \{10,15,20\}$. For our algorithm, we set $\eta_g = 1$. The results are shown in Fig.~\ref{fig-tau-kPCA-syn}, with the $y$-axis representing $\|\grad f(x^r)\|$ and $x$-axis representing the communication quantity. When $\tau$ increases, the convergence becomes faster. For all values of $\tau$, our algorithm achieves high accuracy and requires less time.  
\begin{figure}[htbp]
    \centering
    \includegraphics[width=0.99\textwidth]{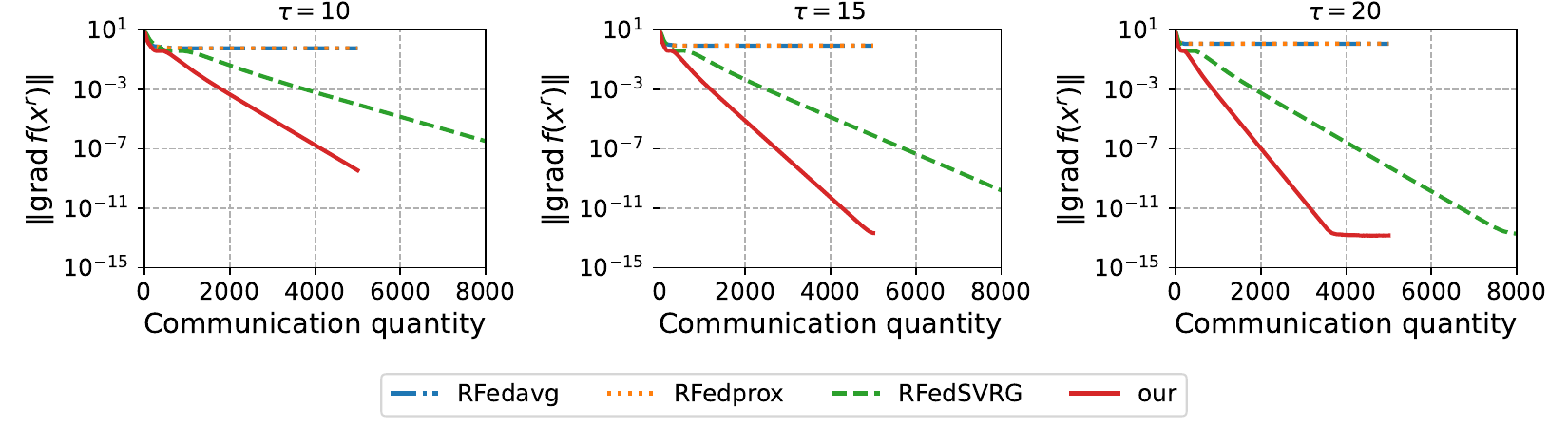}
\caption{kPCA with synthetic dataset: The impacts of $\tau$.}
\label{fig-tau-kPCA-syn}
\end{figure}

\subsubsection{Low-rank matrix completion}\label{matrix-app}
For numerical tests, we consider random generated $A$. To be specific, we first generate two random matrices $\hat{L} \in \R^{d \times k}$ and $\hat{R} \in \R^{k \times T}$, where each entry obeys the standard Gaussian distribution. For the indices set $\Omega$, we generate a random matrix $B$ with each entry following from the uniform distribution, then set $\Omega_{ij} = 1$ if $B_{ij} \leq \nu$ and $0$ otherwise. The parameter $\nu$ is set to $10k(d+T-k)/(dT)$.

As shown in Fig.~\ref{fig-mc-compare-app}, our algorithm is faster than existing algorithms in terms of communication quantity and running time. 
\begin{figure}[htbp]
    \centering
    \subfigure{
        \includegraphics[width=0.99\textwidth]{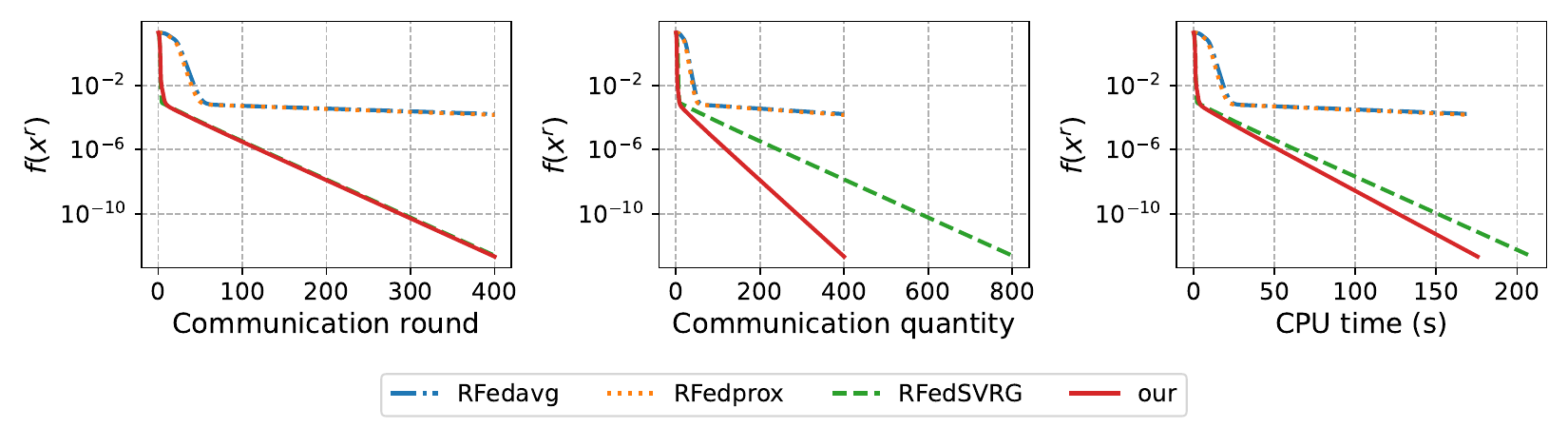}
        \label{fig:mc-chart2}
    }
    \caption{LRMC: Comparison on $f(x^r)$.}
    \label{fig-mc-compare-app}
\end{figure}

We also show the impacts of $\tau$. As shown in Fig.~\ref{fig-tau-LRMC-app}, larger $\tau$ yields less communication quantity to achieve the same accuracy. 
\begin{figure}[htbp]
    \centering
    \includegraphics[width=0.99\textwidth]{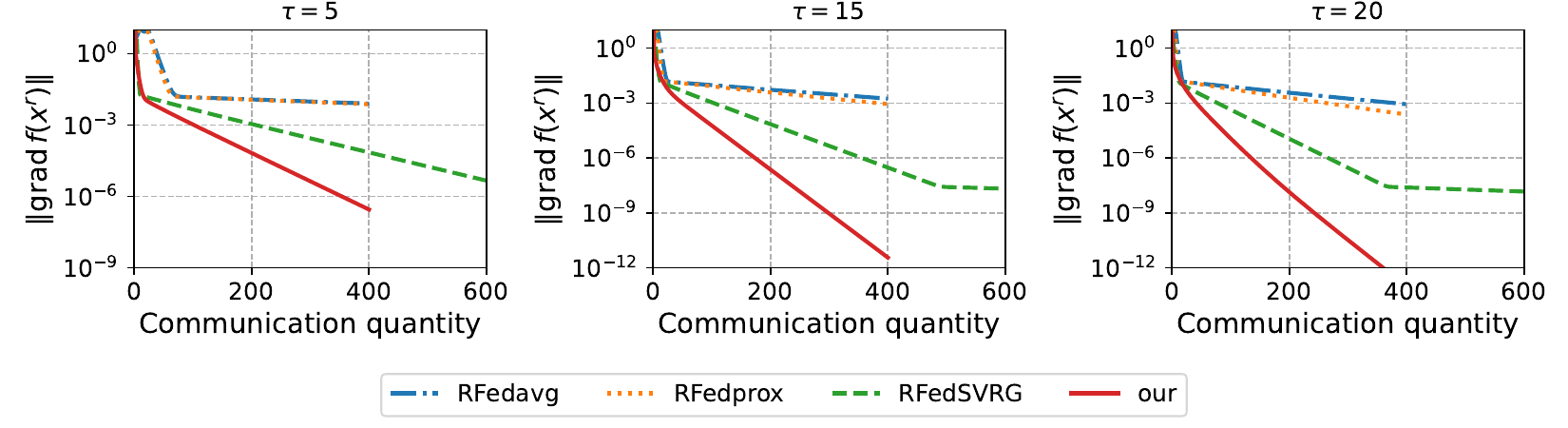}
\caption{LRMC: The impacts of $\tau$.}
\label{fig-tau-LRMC-app}
\end{figure}

\end{document}